\definecolor{cvprblue}{rgb}{0.21,0.49,0.74}
\renewcommand{\thefootnote}{}
\useunder{\uline}{\ul}{}
\newtheorem{proposition}{Proposition}
\newtheorem{lemma}{Lemma}
\providecommand{\customgenericname}{}
\newcommand{\newcustomtheorem}[2]{%
  \newenvironment{#1}[1]
  {%
   \ifdefined\crefalias\crefalias{innercustomgeneric}{#2}\fi
   \renewcommand\customgenericname{#2}%
   \renewcommand\theinnercustomgeneric{##1}%
   \innercustomgeneric
  }
  {\endinnercustomgeneric}%
  \ifdefined\crefname\crefname{#2}{#2}{#2s}\fi
}
\title{Reversing Flow for Image Restoration}
\author{Haina Qin\textsuperscript{1,2*$\heartsuit$}, Wenyang Luo\textsuperscript{1,2*}, Libin Wang\textsuperscript{3$\dagger$}, Dandan Zheng\textsuperscript{3},\\ Jingdong Chen\textsuperscript{3}, Ming Yang\textsuperscript{3}, Bing Li\textsuperscript{1,2,4,$\textrm{\Letter}$} , Weiming Hu\textsuperscript{1,2} \\ 
\textsuperscript{1}State Key Laboratory of Multimodal Artificial Intelligence Systems (MAIS), CASIA;\\
\textsuperscript{2}School of Artificial Intelligence, University of Chinese Academy of Sciences;\ \ \textsuperscript{3}Ant Group;\\ \textsuperscript{4}PeopleAI Inc.\\
{\tt\small \{qinhaina2020@,luowenyang2020@,bli@nlpr.,wmhu@nlpr.\}ia.ac.cn lbin.wlb@antgroup.com}
}
\begin{document}
\maketitle
{\renewcommand{\thefootnote}{*}\footnotetext{Equal contribution. \href{https://github.com/luowyang/Defusion}{Project Page}. $\textrm{\Letter}$ Corresponding author.}}
{\renewcommand{\thefootnote}{$\heartsuit$}\footnotetext{Work done during internship at Ant Group. $\dagger$ Project lead.}}

\begin{abstract}
Image restoration aims to recover high-quality (HQ) images from degraded low-quality (LQ) ones by reversing the effects of degradation. Existing generative models for image restoration, including diffusion and score-based models, often treat the degradation process as a stochastic transformation, which introduces inefficiency and complexity. In this work, we propose ResFlow, a novel image restoration framework that models the degradation process as a deterministic path using continuous normalizing flows. ResFlow augments the degradation process with an auxiliary process that disambiguates the uncertainty in HQ prediction to enable reversible modeling of the degradation process. ResFlow adopts entropy-preserving flow paths and learns the augmented degradation flow by matching the velocity field. ResFlow significantly improves the performance and speed of image restoration, completing the task in fewer than four sampling steps. Extensive experiments demonstrate that ResFlow achieves state-of-the-art results across various image restoration benchmarks, offering a practical and efficient solution for real-world applications.

\end{abstract}


    
\vspace{-0.2cm}
\section{Introduction}
\label{sec:intro}


Image restoration \cite{wang2018esrgan, pan2021exploiting, kim2022bigcolor, wang2022zero, fei2023generative, lin2023diffbir} refers to recovering high-quality (HQ) images from degraded, low-quality (LQ) ones by reversing the effects associated with image degradation to reconstruct the original object. Many problems in image restoration, such as dehaze \cite{guo2022image, zhou2021learning, chen2023learning}, weather removal \cite{deng2020detail, wang2023smartassign, zhu2023learning, liu2018desnownet, ozdenizci2023restoring}, denoise \cite{moran2020noisier2noise, lin2023unsupervised, zhang2023mm}, and artifact removal \cite{jiang2021towards, ehrlich2020quantization}.

\begin{figure}[t]
    \centering
    \includegraphics[width=\linewidth]{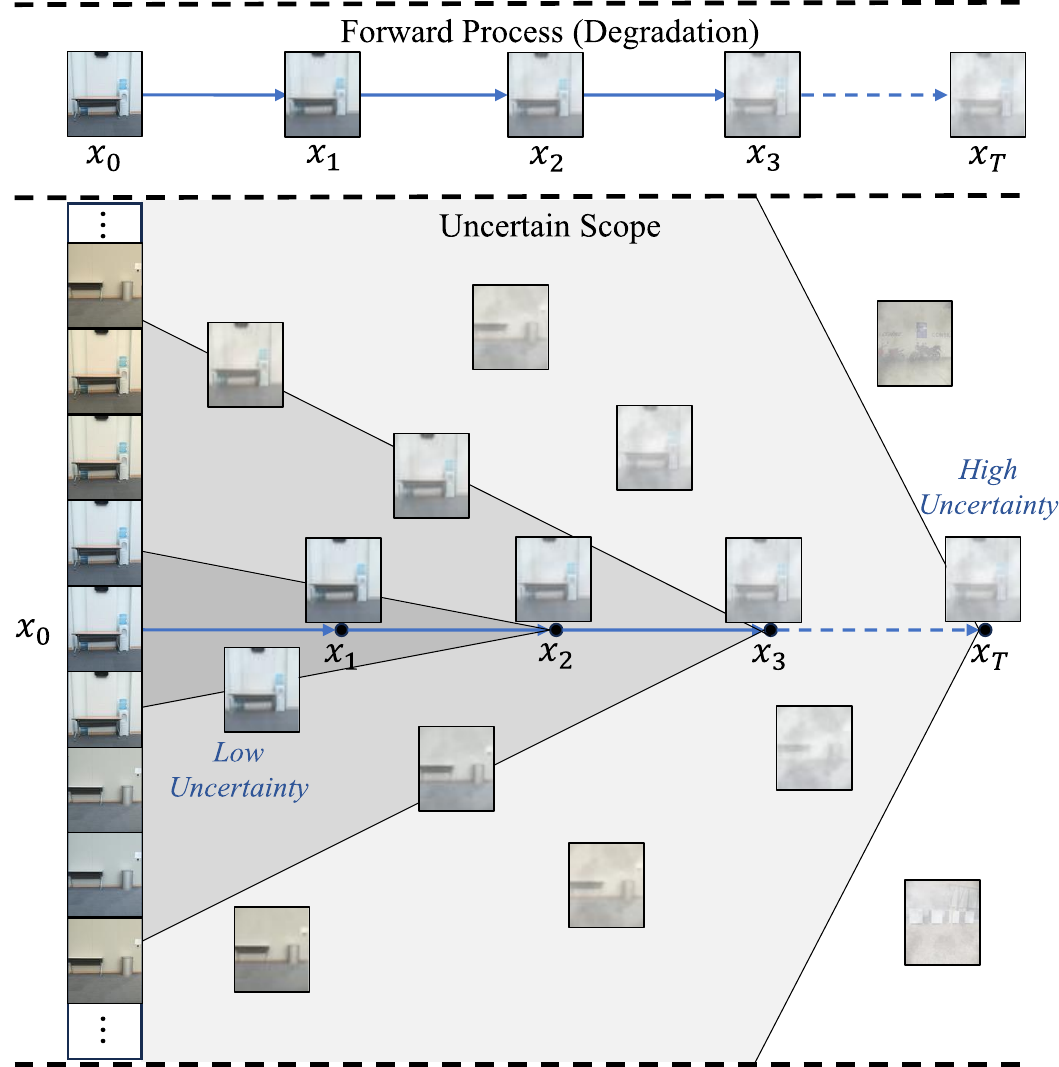}
    \caption{Image restoration is an ill-posed problem. The degradation process incurs decreasing mutual information between HQ and intermediate images, and multiple HQ images can degrade to similar or the same LQ images when their variations diminish in the process. Gray regions represent the uncertainty range from an intermediate state. As the image moves from LQ to HQ, its mutual information with HQ increases, and the uncertainty scope shrinks.}
    \label{fig:intro}
    \vspace{-0.4cm}
\end{figure}

Image restoration is ill-posed because it erases information in the HQ images. Given an LQ image, its corresponding HQ image is not necessarily unique, as depicted in \cref{fig:intro}. Image degradation gradually removes details from the HQ images. The corrupted image can result from multiple HQ images with different degradation processes. In fact, degradation is a Markov chain that transits from HQ to LQ images, which is subject to the data processing inequality (DPI) \cite{beaudry2011intuitive}: the mutual information between HQ and intermediate images decreases as more distortion is applied to the image.
The ambiguous multi-correspondence between HQ and LQ images creates \textit{uncertainty scopes} for LQ images defined as the collection of possible HQ images given the LQ images. The uncertainty scope should be disambiguated to make the degradation process reversible by selecting exactly one HQ from the uncertainty scope given the intermediate image.

Generally, restoration methods navigate the uncertainty scope with prior knowledge besides the LQ images to reverse the degradation effects. The natural image distribution from diffusion and score-based generative models \cite{sohl2015deep, ho2020denoising, song2020score} serves as a strong prior for many approaches \cite{li2022srdiff, shang2024resdiff, xia2023diffir}. These methods define degradation as a conditional stochastic process that stochastically diffuses the HQ image into noise and learns a score function conditioned on the LQ image to reverse this process. The reverse process generates the HQ images under the guidance of the LQ images, where the LQ images provide structural and semantic hints. However, starting the reverse process from Gaussian noise is unnecessary and inefficient be cause it regenerates the structures already known in the LQ images.

To address this inefficiency, some studies incorporate prior knowledge of the degraded image directly into the forward stochastic process to enhance the efficiency of the reverse process \cite{kawar2022denoising, luo2023image, luo2023refusion, shi2023resfusion, liu20232, liu2024residual, yue2024resshift}. For example, DDRM \cite{kawar2022denoising} progressively denoises samples stochastically to achieve the desired output. IR-SDE \cite{luo2023image} models the degradation process using mean-reverting stochastic differential equations (SDEs), while I2SB \cite{liu20232} constructs a Schr{\"o}dinger bridge between the clean and degraded data distributions. ResShift \cite{yue2024resshift} transfers residuals from degraded low-resolution images to high-resolution ones for restoration in the latent space. RDDM \cite{liu2024residual} and Resfusion \cite{shi2023resfusion} introduce residual terms in the forward process. However, these approaches still treat the degradation process as a progressively diffusing stochastic forward process, which seems unnecessary and introduces additional complexity and inefficiency. Given that the degraded image is already known, the degradation process could be redefined as a deterministic forward process.

In this paper, we propose ResFlow, a novel general framework that reverses the deterministic paths between HQ and LQ images for image restoration. ResFlow models the forward process as a deterministic continuous normalizing flow \cite{chen2018neural, lipman2022flow}, directly simulating the path from high-quality images to low-quality ones. The image restoration process begins directly from the degraded image in the reverse process. The construction of the reverse process is non-trivial due to the existence of uncertainty scope: multiple plausible clear images may correspond to the same degraded image. We take into consideration the decrease of mutual information during degradation and augment the degradation process with an auxiliary process that couples with the uncertainty scope to disambiguate the velocity of the backward process. We also derive a flow path based on the intuition of entropy conservation in reversible processes. The deterministic flow path allows ResFlow to achieve better performance and faster generation speeds, completing image restoration in fewer than four sampling steps.

Our contributions are summarized as follows:
\begin{itemize}
    \item We present ResFlow, a novel image restoration framework that reverses the deterministic degradation path between HQ and LQ images for image restoration, which achieves better performance and faster inference.
    \item ResFlow reverses image degradation by augmenting the degradation process with an auxiliary process that couples with the uncertainty scope, and adopts an entropy-preserving flow path in the reverse process.
    \item We conduct experiments on various image restoration tasks and datasets. Results demonstrate the effectiveness of ResFlow that sets new state-of-the-art performance.
\end{itemize}

\section{Related Work}
\label{sec:related}

\begin{figure*}[htbp]
    \centering
    \includegraphics[width=\textwidth]{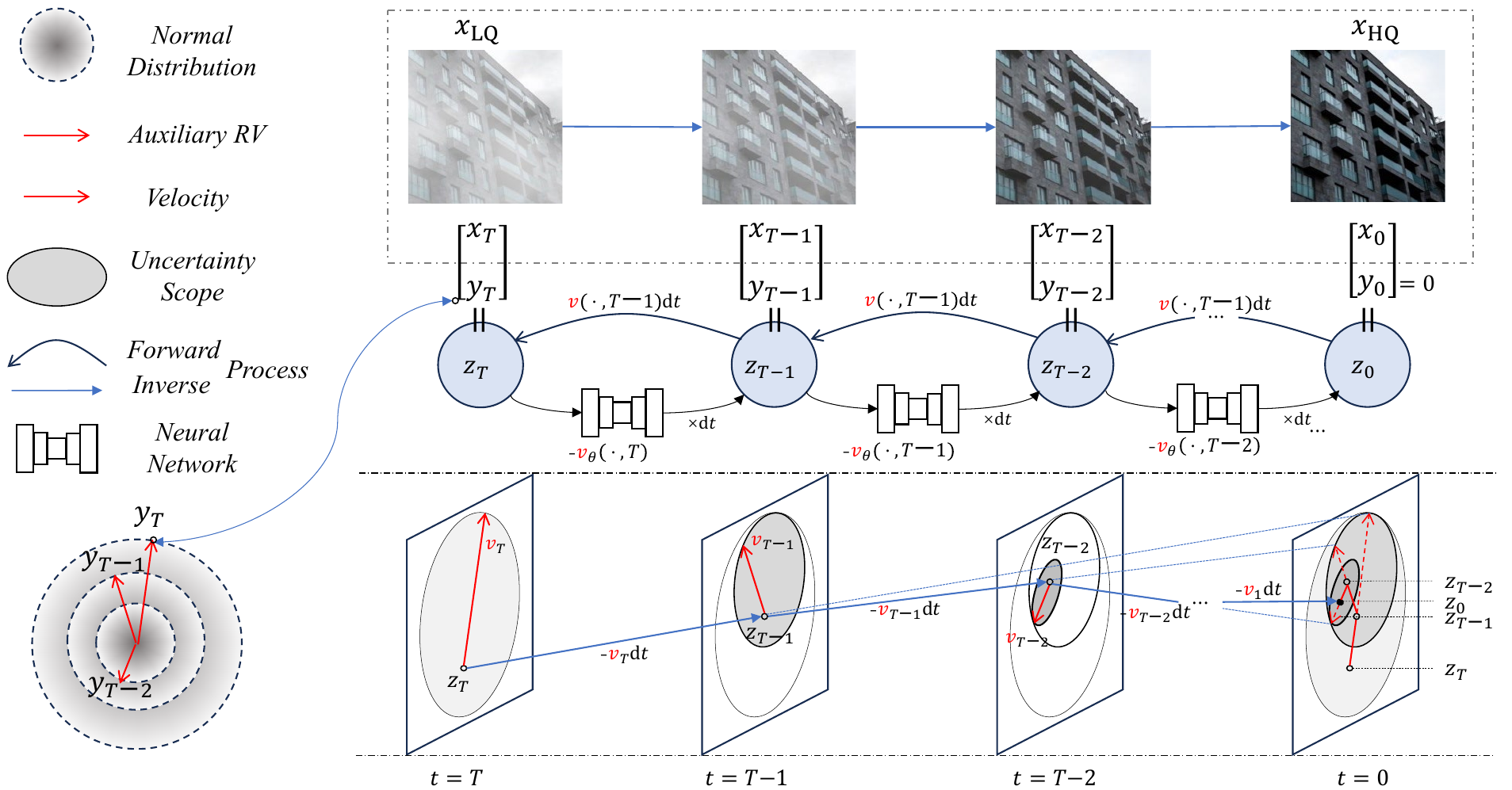}
    \caption{Framework of ResFlow. \textit{RV} stands for \textit{random variable}. The state $\bm{z}_t$ consists of a data component $\bm{x}_t$ that transits between HQ and LQ images and an auxiliary component $\bm{y}_t$ that disambiguates the velocity to ensure invertibility. The forward process is defined by interpolation, while the reverse process is learned by matching the velocity field. The lower part depicts the transition of ResFlow. Image degradation is usually non-reversible due to decreasing mutual information; thus, the velocity is uncertain for $\bm{x}_t$. The range reachable by possible velocity is dubbed uncertainty scope. As $\bm{x}_t$ approaches $\bm{x}_0$, the uncertainty in estimating velocity decreases, and so does the typical set of $\bm{y}_t$.}
    \label{fig:framework}
\end{figure*}

Image restoration can be formulated as an inverse problem \cite{banham1997digital, wang2022zero} where a high quality (HQ) image is reconstructed given its degraded 
 version (rainy \cite{deng2020detail, wang2023smartassign, zhu2023learning}, snowy \cite{liu2018desnownet, ozdenizci2023restoring}, hazy \cite{guo2022image, zhou2021learning, chen2023learning}, noisy \cite{moran2020noisier2noise, lin2023unsupervised, zhang2023mm}, compressed \cite{jiang2021towards, ehrlich2020quantization}, etc.) dubbed low quality (LQ) image.
Image degradations usually erase information from HQ images; thus, the inverse problem (i.e., image restoration) is ill-posed in the sense that one LQ image may come from multiple HQ images by different degradations \cite{banham1997digital, pan2021exploiting, wang2022zero}. To tackle this problem, generative models are widely adopted to provide prior knowledge \cite{pan2021exploiting, kim2022bigcolor, wang2022zero, fei2023generative, lin2023diffbir} about the distribution of HQ images.

Early works follow generative adversarial nets (GANs) \cite{goodfellow2020generative, mirza2014conditional} and train a discriminator to guide the LQ images towards the distribution of HQ images \cite{ledig2017photo, wang2018esrgan, wang2021real, wang2021towards}. Variational autoencoders (VAEs) \cite{kingma2013auto} are another choice that optimizes the evidence lower bound (ELBO) of the restored images \cite{zheng2022learn, soh2022variational, lin2023catch}. These approaches predict the HQ images in a single step and suffer from the quality-diversity dilemma \cite{xiao2021tackling}: the adversarial training of GANs causes unstable training and mode collapse problem \cite{arjovsky2017wasserstein, gulrajani2017improved, salimans2016improved, mao2017least}. In contrast, the mean-squared error (MSE) in VAE training leads to blurry and low-quality predictions \cite{blau2018perception, zhao2016loss, zhang2018unreasonable}.

Recent methods perform multi-step or iterative prediction \cite{liu2024residual, delbracio2023inversion, luo2023refusion, liu2024diff}: splitting the degradation process into small steps and investing each step is more tractable and easier to learn.
Many works build upon diffusion models \cite{sohl2015deep, ho2020denoising, song2020score} or Schr{\"o}dinger bridges \cite{liu20232} and model degradation processes as stochastic paths that transit between HQ and LQ images \cite{liu2024residual, shi2023resfusion, yue2024resshift, luo2023refusion, xia2023diffir, liu2024diff, zhang2024diffusion, yu2024scaling, liu2024structure, kawar2022denoising, luo2023image}. The uncertainty introduced by random transitions slows down training and inference. 
Other works \cite{helminger2021generic, lugmayr2022normalizing, wang2022low} approximate the degradations as a sequence of reversible deterministic steps, dubbed normalizing flows \cite{papamakarios2021normalizing, kingma2018glow}. Taking the limit of the number of steps and making each step infinitesimally small, one obtains an ordinary differential equation (ODE), also known as a continuous normalizing flow \cite{chen2018neural, onken2021ot, lipman2022flow}.
InDI \cite{delbracio2023inversion} incrementally estimates the HQ images, which is equivalent to solving a ``residual flow'' that becomes sensitive to error in prediction near HQ, leading to sub-optimal results.
These flow-based methods contradict the ill-posed and irreversible nature of the image degradation, leading to inferior performance.

Our work defines the degradations from HQ to LQ images by deterministic paths. Acknowledging the decrease of mutual information in image degradation that results in uncertain correspondence between LQ and HQ images, we augment the degradation process with an \textit{auxiliary process} that couples with the uncertainty scope and guides the restoration directions. Our augmented degradation process is fully reversible and can be modeled by a deterministic ODE dubbed \textit{degradation flow}. We learn the velocity field of the degradation flow by matching it with the ground-truth flow path \cite{lipman2022flow}, which leads to fast training and inference and achieves better restoration.

\section{Method}
\label{sec:method}

\subsection{Reversing Flow for Image Restoration}
\label{subsec:visual_instructions}

We tackle the image restoration problem by \textit{inversion}: learning the degradation process that corrupts a high-quality (HQ) image $\bm{x}_\text{HQ}$ into a low-quality (LQ) image $\bm{x}_\text{LQ}$, and ``reversing'' the learned process to restore the HQ image from the LQ image.
The learned degradation process should be 1) \textit{reversible} to allow inverting LQ images to HQ images and 2) \textit{tractable} to enable efficient training and inference.

A natural choice is to model the degradation process by an ordinary differential equation (ODE) on random process $\{\bm{z}_t \,\vert\, 0 \le t \le 1\}$:
\begin{equation}
    \frac{\partial\bm{z}_t}{\partial t} = \bm{v}(\bm{z}_t, t); \quad 0 \le t \le 1,
    \label{eq:ode}
\end{equation}
where $\bm{v}$ is the velocity field, $\bm{z}_0$ corresponds to the HQ image, and $\bm{z}_1$ to the LQ image.
\cref{eq:ode} is also known as a continuous normalizing flow in the literature \cite{chen2018neural, lipman2022flow}.

However, \cref{eq:ode} cannot directly apply to image degradation because it describes an reversible process. In contrast, image degradation is generally irreversible. This can be illustrated from the perspective of mutual information: a random process described by \cref{eq:ode} preserves mutual information while image degradation does not. First, we have the following proposition.

\begin{proposition}
Given random process ${\bm{z}_t}$ defined by \cref{eq:ode}, denote the mutual information as $\mathrm{MI}(\cdot, \cdot)$, then for any reference random variable $\bm{r}$ and any $0\le t_1, t_2 \le 1$, we have
\begin{equation}
    \mathrm{MI}(\bm{z}_{t_1}, \bm{r}) = \mathrm{MI}(\bm{z}_{t_2}, \bm{r}).
    \label{eq:prop1}
\end{equation}
\label{prop:mi}
\end{proposition}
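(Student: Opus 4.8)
The plan is to exploit the fact that the ODE in \cref{eq:ode} induces a deterministic, invertible flow map between any two time slices, together with the invariance of mutual information under bijective transformations of either argument. First I would observe that, under mild regularity of the velocity field $\bm{v}$ (Lipschitz in its first argument and continuous in $t$, so that Picard--Lindel\"of applies), the solution of \cref{eq:ode} defines a flow map $\Phi_{t_1 \to t_2}$ that sends the state $\bm{z}_{t_1}$ to $\bm{z}_{t_2}$. Because the ODE can be integrated both forward and backward in time, this map is a bijection whose inverse is precisely $\Phi_{t_2 \to t_1}$; both the map and its inverse are deterministic, measurable functions (indeed diffeomorphisms) of the state alone.

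The core step is then to apply the data processing inequality (DPI) twice, in opposite directions. Since $\bm{z}_{t_2} = \Phi_{t_1\to t_2}(\bm{z}_{t_1})$ is a deterministic function of $\bm{z}_{t_1}$ that does not involve $\bm{r}$, the triple $\bm{r} \to \bm{z}_{t_1} \to \bm{z}_{t_2}$ forms a Markov chain, so DPI gives $\mathrm{MI}(\bm{z}_{t_2}, \bm{r}) \le \mathrm{MI}(\bm{z}_{t_1}, \bm{r})$. Applying the same argument to the inverse map, $\bm{r} \to \bm{z}_{t_2} \to \bm{z}_{t_1}$ is also a Markov chain, which yields the reverse inequality $\mathrm{MI}(\bm{z}_{t_1}, \bm{r}) \le \mathrm{MI}(\bm{z}_{t_2}, \bm{r})$. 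Combining the two inequalities forces equality, which is exactly \cref{eq:prop1}. Equivalently, one could invoke directly the standard fact that mutual information is preserved under an invertible reparametrization of either random variable, but phrasing it through a two-sided DPI keeps the argument self-contained.

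The main obstacle I anticipate is not the information-theoretic step, which is routine once invertibility is established, but justifying that the flow map is genuinely well-defined and bijective on all of $[0,1]$. This requires stating (or assuming) the regularity conditions on $\bm{v}$ that guarantee global existence and uniqueness of trajectories; without them $\Phi_{t_1\to t_2}$ could fail to be a bijection, for instance if trajectories merge or escape to infinity in finite time. A secondary technical point is applying the DPI correctly in the continuous setting, namely verifying that conditioning on $\bm{z}_{t_1}$ renders $\bm{z}_{t_2}$ conditionally independent of $\bm{r}$; this holds immediately because the ODE evolution depends on the current state alone and carries no information about the reference variable $\bm{r}$.
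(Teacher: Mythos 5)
Your proof is correct, and it shares the paper's overall skeleton---establish that the ODE flow map between any two time slices is a deterministic bijection, then conclude that mutual information with any reference variable is unchanged---but the information-theoretic step is handled by a genuinely different device. The paper invokes, as a black-box lemma (citing Kraskov et al.), the invariance of mutual information under invertible maps, $\mathrm{MI}(\bm{X},\bm{Y})=\mathrm{MI}(\bm{F}(\bm{X}),\bm{G}(\bm{Y}))$, and spends most of its effort on the ODE side: it writes down the reverse-time equation explicitly, passes to time-augmented autonomous systems to handle the time-dependence of $\bm{v}$, and verifies that the forward and backward flows compose to the identity. You instead derive the invariance from first principles by a two-sided data processing inequality: $\bm{r}\to\bm{z}_{t_1}\to\bm{z}_{t_2}$ is a Markov chain because $\bm{z}_{t_2}$ is a deterministic function of $\bm{z}_{t_1}$ alone, giving $\mathrm{MI}(\bm{z}_{t_2},\bm{r})\le\mathrm{MI}(\bm{z}_{t_1},\bm{r})$, and the inverse map gives the reverse inequality. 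This is more elementary and self-contained, and it is conceptually apt here: the proposition is motivated in the paper precisely by contrasting the ODE with degradation processes where DPI is strict, and your argument exhibits the ODE as exactly the case where DPI holds with equality in both directions. What the paper's route buys is rigor on the dynamical-systems side---its explicit reverse-flow construction is the careful version of your one-sentence claim that the ODE ``can be integrated both forward and backward in time,'' which for a non-autonomous velocity field is slightly less immediate than it sounds; your appeal to Picard--Lindel\"of regularity (plus global existence on $[0,1]$) covers the same ground, and both proofs ultimately rest on the same assumptions of unique solutions and finite mutual information, which you rightly flag as the real hypotheses of the statement.
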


\begin{proof}
    See the supplementary material.
\end{proof}

The reference random variable $\bm{r}$ can be arbitrary state $\bm{z}_t \neq \bm{z}_0$, indicating that simulating \cref{eq:ode} does not lose any information: everything we known about $\bm{z}_0$ remains in $\bm{z}_t$.
However, this property does not hold for the ill-posed image degradation. Generally, the mutual information between the intermediate states and the HQ images decreases as more distortion is applied to the images as a consequence of the data processing inequality (DPI) \cite{beaudry2011intuitive}: as $\bm{x}_t$ approaches $\bm{x}_\text{LQ}$, it shares less mutual information with $\bm{x}_\text{HQ}$, so generally less is known about HQ images.
As a result, an LQ image can correspond to multiple HQ images through different degradation processes. \cref{fig:intro} shows an example where adding haze to different HQ images blends their content and leads to similar or the same LQ images. Noise is another example that eventually converts HQ images into indistinguishable noise.

The collection of possible HQ images given the LQ images is dubbed the uncertainty scope. The uncertainty scope should be disambiguated to make the degradation process reversible by selecting exactly one HQ from the uncertainty scope given the intermediate image. We achieve this by augmenting the degradation process with an auxiliary process $\{\bm{y}_t \,\vert\, 0 \le t \le 1\}$ that couples with the uncertainty scope and evolves with the degraded image. The ODE states $\{\bm{z}_t\}$ now become
\begin{equation}
    \bm{z}_t^\mathsf{T} = [\bm{x}_t^\mathsf{T}; \bm{y}_t^\mathsf{T}], \quad \\
    \bm{z}_0^\mathsf{T} = [\bm{x}_\text{HQ}^\mathsf{T}; \bm{y}_0^\mathsf{T}], \quad \\
    \bm{z}_1^\mathsf{T} = [\bm{x}_\text{LQ}^\mathsf{T}; \bm{y}_1^\mathsf{T}].
    \label{eq:augment}
\end{equation}
Conceptually, $\bm{y}_t$ encodes the ``information loss'' caused by image degradation. According to DPI, the mutual information from the coupling between it and the HQ images increases as $\bm{z}_t$ approaches $\bm{z}_1$ to keep $\mathrm{MI}(\bm{z}_t, \bm{z}_0)$ constant. This makes the choice of $\{\bm{y}_t\}$ non-trivial because $\bm{y}_1$ should have the maximal mutual information with $\bm{x}_0$, which is equivalent to knowing the ground-truth $\bm{x}_0$ in prior and thus infeasible. The next section details how we parameterize the augmented flow and tackle this problem.

\subsection{Parameterization}
\label{subsec:parameterization}

Learning to invert the degradation process by the augmented flow (\cref{eq:ode,eq:augment}) requires obtaining the coupling between the auxiliary $\bm{y}_t$ and $\bm{x}_0$ that conceptually maps to the degraded mutual information.
While one can manually determine such coupling, it is not necessarily optimal.
Instead, inspired by recent advances in transport-based generative modeling techniques \cite{liu2022flow,lipman2022flow}, we \textit{learn the deterministic coupling starting from an arbitrary coupling between $\bm{y}_t$ and $\bm{x}_0$}.

Given a pair of HQ and LQ images, the only bound of the degradation process (\cref{eq:ode}) is that it begins with the HQ image and ends with the LQ image as formally given by \cref{eq:augment}. Usually, the ground-truth ``natural'' degradation paths between HQ and LQ images are infeasible to acquire,
so we instead define the flow paths of $\bm{z}_t$ as geodesics in the Euclidean space:
\begin{align}
    &\bm{z}_t = \alpha_t \bm{z}_0 + \sigma_t \bm{z}_1, \quad \alpha_t, \sigma_t \ge 0,  \label{eq:interpolation_1} \\
    &\text{such that} ~~ \alpha_0 = \sigma_1 = 1, ~~ \alpha_1 = \sigma_0 = 0.
    \label{eq:interpolation}
\end{align}
We name $\{\alpha_t, \sigma_t\}$ as the degradation schedule defining the flow paths' dynamics.
Given the degradation schedules, the augmented degradation process \cref{eq:ode,eq:augment} can be parameterized by a neural network $\bm{v}_\theta$ that estimates the velocity field $\bm{v}$:
\begin{equation}
    \frac{\partial[\bm{x}_t^\mathsf{T}; \bm{y}_t^\mathsf{T}]^\mathsf{T}}{\partial t} = \bm{v}_\theta(\bm{x}_t, \bm{y}_t, t).
    \label{eq:parameterization}
\end{equation}

The auxiliary $\bm{y}_t$ is chosen to be Gaussian at $t=1$ as it maximizes entropy and zero at $t=0$ where the restoration ends.
During training, $\{\bm{y}_t\}$ are independently coupled with both $\bm{x_0}$ and $\bm{x}_1$. However, the trained velocity network $\bm{v}_\theta$ induces a deterministic coupling between $\bm{x}_0$, $\bm{x}_1$ and $\bm{y}_t$ as defined by \cref{eq:parameterization}. This coupling disambiguates the velocity when multiple possible HQ images exist given an $\bm{x}_t$ as in \cref{fig:intro,fig:framework}. In such cases, the velocity network $\bm{v}_\theta$ uniquely maps $\bm{y}_t$ to one of the possible velocities.

We also note that the image $\bm{x}_t$ and auxiliary $\bm{y}_t$ in $\bm{z}_t$ do not necessarily have to use the same degradation schedule.
Denoting the respective degradation schedules of $\bm{x}_t$ and $\bm{y}_t$ as $\{\alpha_t^{\bm{x}}, \sigma_t^{\bm{x}}\}$ and $\{\alpha_t^{\bm{y}}, \sigma_t^{\bm{y}}\}$, we propose the following entropy-preserving degradation schedule:
\begin{align}
    \alpha_t^{\bm{x}} &= 1 - \sigma_t^{\bm{x}},  \quad \sigma_t^{\bm{x}} = t \\ 
    \alpha_t^{\bm{y}} &= 1 - \sigma_t^{\bm{y}} ,\quad \sigma_t^{\bm{y}} = \beta \cdot \left( 1 - t + \beta \right)^{-1}.
    \label{eq:schedule}
\end{align}
Here, $\beta=10$ is a hyperparameter. This schedule moves $\bm{x}_t$ in straight lines (geodesics in Euclidean space). It keeps the entropy of $\bm{z}_t$ constant throughout the flow paths based on the intuition that the entropy remains the same for the reversible process
(see the supplementary material for the derivation).

An alternative parameterization is to estimate the expectation $\mathbb{E}\left[\bm{z}_0 \,\vert\, \bm{z}_t\right]$ with a neural network \cite{delbracio2023inversion}. However, this approach is equivalent to a time-weighted version of \cref{eq:parameterization} that leads to high discretization errors near $t=0$ when solved numerically.

\begin{table*}[htbp]
\caption{Synthetic datasets. Desnowing, Deraining, and Dehazing results on Snow100K \cite{liu2018desnownet}, Outdoor-Rain \cite{li2019heavy}, and Dense-Haze \cite{ancuti2019dense}. }
\label{table:synthetic}
\renewcommand\arraystretch{1.30}
\resizebox{\textwidth}{!}{%
\begin{tabular}{lccclccclcc}
\hline
\multirow{2}{*}{\textbf{Method}} & \multicolumn{3}{c}{\textbf{Desnowing}} & \multirow{2}{*}{\textbf{Method}} & \multicolumn{3}{c}{\textbf{Deraining}} & \multirow{2}{*}{\textbf{Method}} & \multicolumn{2}{c}{\textbf{Dehazing}} \\ \cline{2-4} \cline{6-8} \cline{10-11} 
 & PSNR$\uparrow$ & SSIM$\uparrow$ & LPIPS$\downarrow$ &  & PSNR$\uparrow$ & SSIM$\uparrow$ & LPIPS$\downarrow$ &  & PSNR$\uparrow$ & SSIM$\uparrow$ \\ \hline
SPANet\cite{wang2019spatial} & 23.70 & 0.793 & 0.104 & CycleGAN\cite{zhu2017unpaired} & 17.62 & 0.656 & - & DehazeNet\cite{cai2016dehazenet} & 13.84 & 0.43 \\
JSTASR\cite{chen2020jstasr} & 25.32 & 0.807 & 0.059 & pix2pix\cite{isola2017image} & 19.09 & 0.710 & - & AOD-Net\cite{li2017aod} & 13.14 & 0.41 \\
RESCAN\cite{li2018recurrent} & 26.08 & 0.810 & 0.054 & HRGAN\cite{li2019heavy} & 21.56 & 0.85 & 0.154 & SGID\cite{bai2022self} & 12.49 & 0.51 \\
DesnowNet\cite{liu2018desnownet} & 27.17 & 0.898 & 0.070 & PCNet\cite{jiang2021rain} & 26.19 & 0.901 & 0.132 & MSBDN\cite{dong2020multi} & 15.37 & 0.49 \\
MPRNet\cite{zamir2021multi} & 29.76 & 0.894 & 0.049 & MPRNet\cite{zamir2021multi} & 28.03 & 0.919 & 0.089 & FFA-Net\cite{qin2020ffa} & 14.39 & 0.45 \\
NAFNet\cite{chen2022simple} & 30.06 & 0.901 & 0.051 & NAFNet\cite{chen2022simple} & 29.59 & 0.902 & 0.085 & AECR-Net\cite{wu2021contrastive} & 15.80 & 0.47 \\
Restormer\cite{zamir2022restormer} & 30.52 & 0.909 & 0.047 & Restormer\cite{zamir2022restormer} & 29.97 & 0.921 & 0.074 & DeHamer\cite{guo2022image} & {\ul 16.62} & 0.56 \\
SnowDiff64\cite{ozdenizci2023restoring} & 30.43 & {\ul 0.914} & 0.035 & RainHazeDiff64\cite{ozdenizci2023restoring} & 28.38 & 0.932 & 0.067 & PMNet\cite{ye2022perceiving} & 16.79 & 0.51 \\
SnowrDiff128\cite{ozdenizci2023restoring} & 30.28 & 0.900 & 0.038 & RainHazeDiff128\cite{ozdenizci2023restoring} & 26.84 & 0.915 & 0.071 & FocalNet\cite{cui2023focal} & 17.07 & \textbf{0.63} \\
DTPM-4\cite{ye2024learning} & {\ul 30.92} & \textbf{0.917} & {\ul 0.034} & DTPM-4\cite{ye2024learning} & {\ul 30.99} & {\ul 0.934} & {\ul 0.0635} & MB-Taylor\cite{qiu2023mb} & 16.44 & 0.56 \\ \hline
\textbf{ResFlow(Ours)} & \textbf{31.86} & \textbf{0.917} & \textbf{0.030} & \textbf{ResFlow(Ours)} & \textbf{32.82} & \textbf{0.936} & \textbf{0.0514} & \textbf{ResFlow(Ours)} & \textbf{17.12} & {\ul 0.59} \\ \hline
\end{tabular}%
}
\end{table*}

\subsection{Optimization and Inference}
\label{subsec:optimization}

The parameterized augmented degradation ODE \cref{eq:parameterization} is learned by matching the velocity field \cite{lipman2022flow, liu2022flow}:
\begin{align}
    & \min_{\theta} \mathbb{E}_{\bm{x}_0, \bm{x}_1, \bm{y}_0, \bm{y}_1} \left[ \int_{0}^{1} \lambda(t) \left\| \bm{v}_\theta(\bm{x}_t, \bm{y}_t, t) - \dot{\bm{z}}_t \right\|^2 \,\mathrm{d}t \right], \label{eq:optimization_inner} \\
    & \text{where} \quad \dot{\bm{z}}_t = 
    \left[\begin{matrix}
        \dot{\bm{x}}_t \\
        \dot{\bm{y}}_t \\
    \end{matrix}\right]
    =
    \left[\begin{matrix}
        \dot{\alpha}_t^{\bm{x}} \bm{x}_0 + \dot{\sigma}_t^{\bm{x}} \bm{x}_1 \\
        \dot{\alpha}_t^{\bm{y}} \bm{y}_0 + \dot{\sigma}_t^{\bm{y}} \bm{y}_1 \\
    \end{matrix}\right].
    \label{eq:optimization}
\end{align}
Here, $\lambda(t)$ is a time-dependent loss weighting function, $\|\cdot\|$ is the L2 norm. Optimizing \cref{eq:optimization} is efficient as it does not require simulating \cref{eq:parameterization} as in traditional simulation-based methods \cite{chen2018neural, helminger2021generic, lugmayr2022normalizing}.
Moreover, \cref{eq:optimization} provides additional benefit that any convex transport cost induced by the coupling between $\bm{z}_0$ and $\bm{z}_1$ is guaranteed to be non-increasing \cite{liu2022flow} combined with \cref{eq:schedule}.

We also propose a loss weighting scheme for image degradation that emphasizes time $t$ close to $1$, which empirically improves the quality of estimated HQ images:
\begin{equation}
    \lambda(t) = \left(\cos\left(\frac{\pi}{2} (t - 2)\right) + 1\right)^{\gamma},
    \label{eq:weighting}
\end{equation}
where $\gamma=1.75$. The rationale is that when $\bm{x}_t$ is close to the LQ image $\bm{x}_1$, its mutual information with the HQ image $\bm{x}_0$ decreases, making the velocity prediction more difficult near $t=1$. Thus, we increase the loss weighting as $t$ increases to balance the gradients towards those from the harder tasks.

After training by \cref{eq:optimization}, we restore the LQ image $\bm{x}_1$ by first sampling $\bm{y}_1 \sim \mathcal{N}(\bm{0}, \bm{I})$, then numerically solving \cref{eq:parameterization} from $t=1$ to $t=0$ and obtain $\hat{\bm{x}}_0$ as the predicted HQ image. The intermediate $\hat{\bm{y}}_t$ is discarded and replaced with the ground-truth $\bm{y}_t$ given by \cref{eq:interpolation}. Conceptually, $\bm{y}_t$ is asymptotically mapped to the corrupted information during inference.

\section{Experiments}
\label{sec:exp}

\begin{figure*}[htbp]
    \centering
    \includegraphics[width=\textwidth]{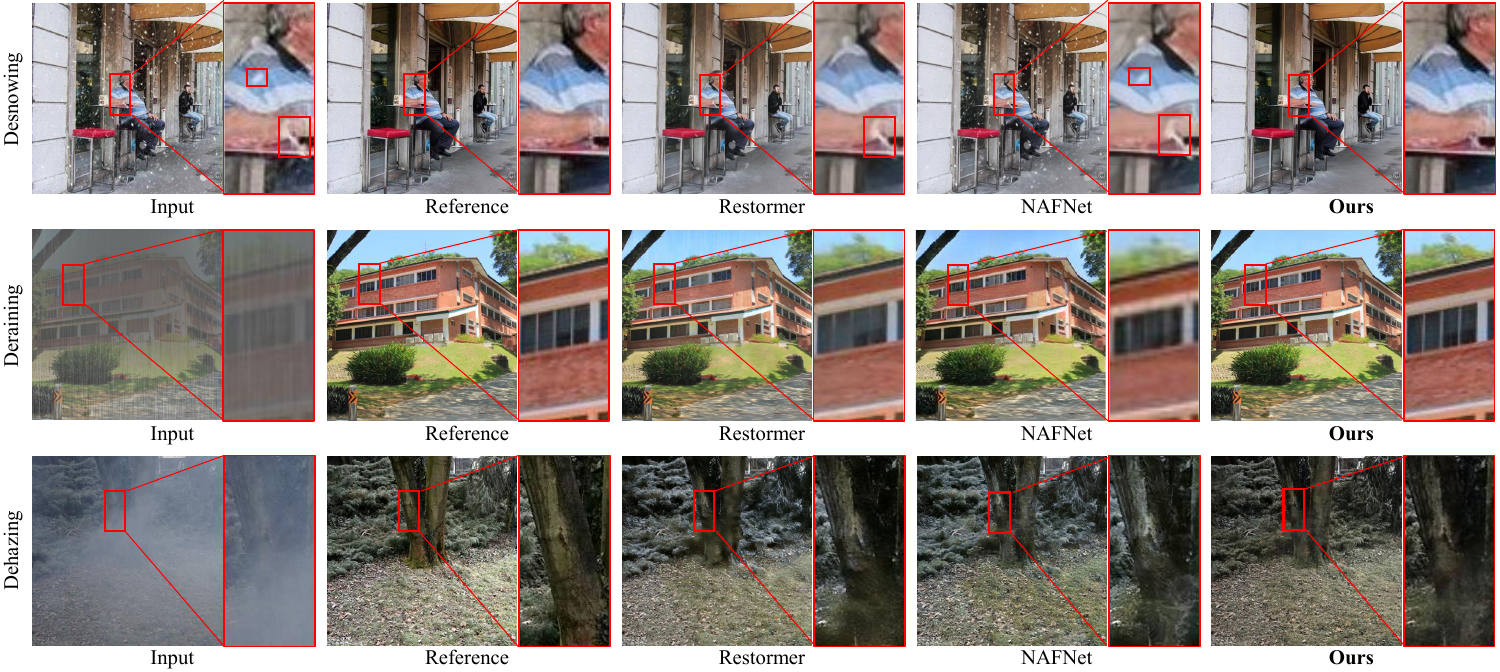}
    \caption{Dehazing, Deraining, and Desnowing results. The part of the image is methodized to observe the local details clearly. From left to right: input blurry images, reference images, and the predicted images obtained by Restormer \cite{zamir2022restormer}, NAFNet \cite{chen2022simple}, and our ResFlow, respectively. Note that both Restormer and NAFNet retain some artifacts in the output images.}
    \label{fig:weather}
\end{figure*}

\subsection{Settings}
To evaluate ResFlow's performance, we conducted experiments on five major image restoration tasks, including desnowing, draining, dehazing, denoising, and JPEG compression artifact removal, using synthetic and real-world datasets. For image desnowing, we conducted experiments on the Snow100K \cite{liu2018desnownet} dataset and the RealSnow \cite{zhu2023learning} dataset. We used the Outdoor-Rain \cite{li2019heavy} dataset and the LHP \cite{guo2023sky} dataset to evaluate deraining performance. The Dense-Haze \cite{ancuti2019dense} and NH-HAZE \cite{ancuti2020nh} datasets were employed to assess image dehazing. We utilized the SIDD \cite{abdelhamed2018high} dataset for real-world denoising. The DPDD \cite{abuolaim2020defocus} dataset was used to test single-image defocus deblurring. The LVE1 \cite{sheikh2005live} and BSD500 \cite{martin2001database} datasets were used to verify the removal of JPEG artifacts through entropy validation. We computed several distortion and perception-based metrics, including PSNR, SSIM \cite{wang2004image}, MAE, and LPIPS \cite{zhang2018unreasonable}.

We adopted the same U-Net architecture \cite{ronneberger2015u} as DDPM \cite{ho2020denoising} to predict the velocity in \cref{eq:ode} for all tasks. Timestep $t$ is embedded and injected into U-Net blocks via adaptive layer normalization \cite{xu2019understanding}. 
The model was trained using the Adam optimizer \cite{kingma2014adam} on 256-resolution image crops and tested on full-resolution images. The learning rate and other hyperparameters are detailed in Appendix C. We employed a uniform time schedule and performed only four sampling steps for all datasets.

\subsection{Main Results}

\begin{table*}[!h]
\caption{Real-world datasets. Dehazing results on NH-HAZE 
\cite{ancuti2020nh}, Denoising results on SIDD \cite{abdelhamed2018high}, Deraining results on LHP \cite{guo2023sky}, and Desnowing results on RealSnow \cite{zhu2023learning}}
\renewcommand\arraystretch{1.30}
\resizebox{\textwidth}{!}{%
\begin{tabular}{lcclcclcclcc}
\hline
\multirow{2}{*}{\textbf{Method}} & \multicolumn{2}{c}{\textbf{Denoising}} & \multirow{2}{*}{\textbf{Method}} & \multicolumn{2}{c}{\textbf{Dehazing}} & \multirow{2}{*}{\textbf{Method}} & \multicolumn{2}{c}{\textbf{Deraining}} & \multirow{2}{*}{\textbf{Method}} & \multicolumn{2}{c}{\textbf{Desnowing}} \\ \cline{2-3} \cline{5-6} \cline{8-9} \cline{11-12} 
 & PSNR & SSIM &  & \multicolumn{1}{l}{PSNR} & \multicolumn{1}{l}{SSIM} &  & PSNR & SSIM &  & PSNR & SSIM \\ \hline
DAGL\cite{mou2021dynamic} & 38.94 & 0.953 & DehazeNet\cite{cai2016dehazenet} & 16.62 & 0.52 & SPANet\cite{wang2019spatial} & 31.19 & 0.934 & MIRNetv2\cite{zamir2022learning} & 31.39 & 0.916 \\
DeamNet\cite{ren2021adaptive} & 39.47 & 0.957 & AOD-Net\cite{li2017aod} & 15.40 & 0.57 & PReNet\cite{ren2019progressive} & 32.13 & 0.917 & ART\cite{zhang2022accurate} & 31.05 & 0.913 \\
MIRNet\cite{zamir2020learning} & 39.72 & 0.959 & AECR-Net\cite{wu2021contrastive} & 19.88 & 0.72 & RCDNet\cite{huang2021selective} & 32.34 & 0.915 & Restormer\cite{zamir2022restormer} & 31.38 & \textbf{0.923} \\
DANet\cite{yue2020dual} & 39.47 & 0.957 & DeHamer\cite{guo2022image} & 20.66 & 0.68 & MPRNet\cite{zamir2021multi} & 33.34 & 0.930 & NAFNet\cite{chen2022simple} & {\ul 31.44} & 0.919 \\
Restormer\cite{zamir2022restormer} & {\ul 40.02} & {\ul 0.960} & PMNet\cite{ye2022perceiving} & 20.42 & {\ul 0.73} & SCD-Former\cite{guo2023sky} & {\ul 34.33} & \textbf{0.946} & WGWS-Net\cite{zhu2023learning} & 31.37 & 0.919 \\
Xformer\cite{zhang2023xformer} & 39.98 & {\ul 0.960} & FocalNet\cite{cui2023focal} & {\ul 20.43} & \textbf{0.79} & IDT\cite{jiang2021towards} & 33.02 & 0.931 & TransWeather\cite{valanarasu2022transweather} & 31.13 & {\ul 0.922} \\ \hline
\textbf{ResFlow(Ours)} & \textbf{42.26} & \textbf{0.962} & \textbf{ResFlow(Ours)} & \textbf{21.44} & \textbf{0.79} & \textbf{ResFlow(Ours)} & \textbf{34.54} & {\ul 0.939} & \textbf{ResFlow(Ours)} & \textbf{31.63} & 0.919 \\ \hline
\end{tabular}%
}
\label{table:real}
\end{table*}

\begin{table*}[!h]
\caption{Single-image Defocus Deblurring comparisons on the DPDD \cite{abuolaim2020defocus} datasets.}
\label{table:dpdd}
\renewcommand\arraystretch{1.25}
\resizebox{\textwidth}{!}{%
\begin{tabular}{lcccccccccccc}
\hline
\multirow{2}{*}{\textbf{Method}} & \multicolumn{4}{c}{\textbf{Indoor Scenes}} & \multicolumn{4}{c}{\textbf{Outdoor Scenes}} & \multicolumn{4}{c}{\textbf{Combined}} \\ \cline{2-13} 
 & PSNR$\uparrow$ & SSIM$\uparrow$ & MAE$\downarrow$ & LPIPS$\downarrow$ & PSNR$\uparrow$ & SSIM$\uparrow$ & MAE$\downarrow$ & LPIPS$\downarrow$ & PSNR$\uparrow$ & SSIM$\uparrow$ & MAE$\downarrow$ & LPIPS$\downarrow$ \\ \hline
DPDNet\cite{abuolaim2020defocus} & 26.54 & 0.816 & 0.031 & 0.239 & 22.25 & 0.682 & 0.056 & 0.313 & 24.34 & 0.747 & 0.044 & 0.277 \\
KPAC\cite{son2021single} & 27.97 & 0.852 & 0.026 & 0.182 & 22.62 & 0.701 & 0.053 & 0.269 & 25.22 & 0.774 & 0.040 & 0.227 \\
DeepRFT\cite{mao2021deep} & \multicolumn{4}{c}{-} & \multicolumn{4}{c}{-} & 25.71 & 0.801 & 0.039 & 0.218 \\
IFAN\cite{lee2021iterative} & 28.11 & 0.861 & 0.026 & 0.179 & 22.76 & 0.72 & 0.052 & 0.254 & 25.37 & 0.789 & 0.039 & 0.217 \\
DRBNet\cite{ruan2022learning} & \multicolumn{4}{c}{-} & \multicolumn{4}{c}{-} & 25.73 & 0.791 & - & 0.183 \\
Restormer\cite{zamir2022restormer} & 28.87 & {\ul 0.882} & 0.025 & {\ul 0.145} & 23.24 & {\ul 0.743} & 0.050 & {\ul 0.209} & 25.98 & 0.811 & 0.038 & 0.178 \\
EBDB\cite{karaali2017edge} & \multicolumn{4}{c}{-} & \multicolumn{4}{c}{-} & 23.45 & 0.683 & 0.049 & 0.336 \\
DMENet\cite{lee2019deep} & \multicolumn{4}{c}{-} & \multicolumn{4}{c}{-} & 23.41 & 0.714 & 0.051 & 0.349 \\
JNB\cite{shi2015just} & \multicolumn{4}{c}{-} & \multicolumn{4}{c}{-} & 23.84 & 0.715 & 0.048 & 0.315 \\
FocalNet\cite{cui2023focal} & {\ul 29.10} & 0.876 & {\ul 0.024} & 0.173 & {\ul 23.41} & {\ul 0.743} & {\ul 0.049} & 0.246 & {\ul 26.18} & 0.808 & {\ul 0.037} & 0.210 \\
DTPM-4\cite{ye2024learning} & \multicolumn{4}{c}{-} & \multicolumn{4}{c}{-} & 25.98 & {\ul 0.823} & 0.038 & {\ul 0.153} \\ \hline
\textbf{ResFlow(Ours)} & \textbf{29.81} & \textbf{0.907} & \textbf{0.022} & \textbf{0.096} & \textbf{24.25} & \textbf{0.782} & \textbf{0.046} & \textbf{0.166} & \textbf{26.96} & \textbf{0.842} & \textbf{0.034} & \textbf{0.131} \\ \hline
\end{tabular}%
}
\end{table*}

\begin{figure*}[!h]
    \centering
    \includegraphics[width=\textwidth]{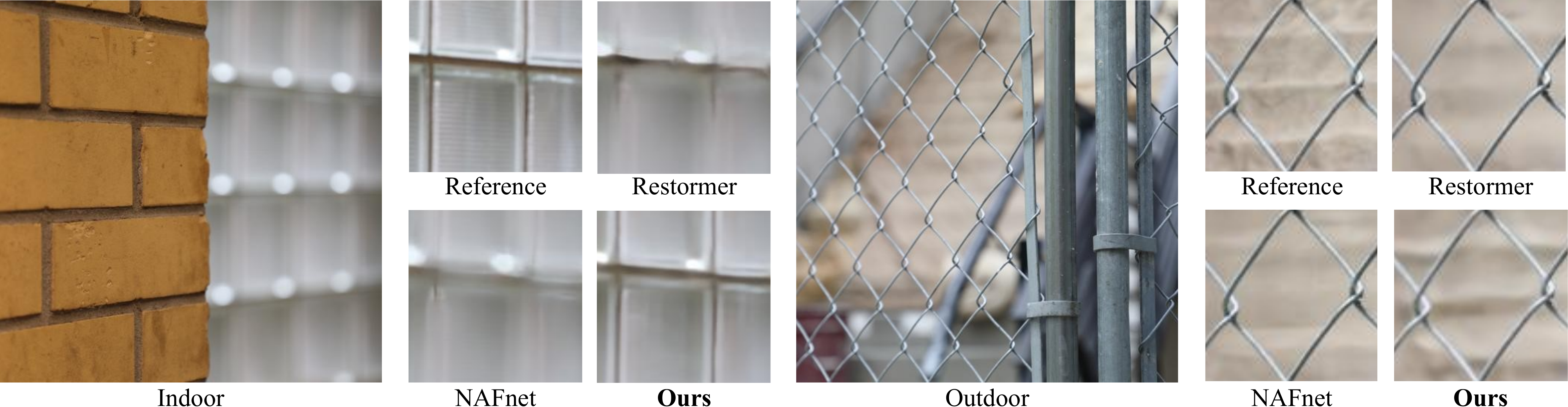}
    \caption{Single-image defocus deblurring results on the DPDD \cite{abuolaim2020defocus} dataset. The part of the image is methodized to observe the local details clearly. From left-top to right-bottom: input blurry images, reference images, and the predicted images obtained by Restormer \cite{zamir2022restormer}, NAFNet \cite{chen2022simple}, and our ResFlow, respectively. See the supplementary material for extra visualizations.}
    \label{fig:dpdd}
\end{figure*}

\noindent
\textbf{Desnowing Results.}
We report our method's quantitative performance on synthetic and real-world datasets in \cref{table:synthetic,table:real}. Overall, our method outperforms state-of-the-art algorithms across all datasets. Specifically, on the synthetic Snow100K-L \cite{liu2018desnownet} dataset, our method surpasses DTPM \cite{ye2024learning} by 0.86dB in PSNR. Additionally, our approach performs well in more challenging real-world scenarios, achieving the best performance across most metrics. On the RealSnow \cite{zhu2023learning} dataset, our method exceeds NAFNet \cite{chen2022simple} by 0.21dB in PSNR. Visual examples of several methods are provided in \cref{fig:weather}, where our method demonstrates superior removal of snowflakes and enhanced detail quality compared to other approaches.
Crucially, our method successfully removes all the snowflakes dispersed in the whole image, while the compared methods leave out some snowflakes in their predicted images. We attribute this to our method's ability to perform multi-step restoration: single-step prediction can be imprecise when the degradation is strong, while our multi-step method can gradually remove the degradation, each step generating better results than in the previous steps.

\noindent
\textbf{Deraining Results.}
The numerical results for the synthetic Outdoor-Rain \cite{li2019heavy} dataset and the real-world LHP \cite{guo2023sky} dataset are presented in \Cref{table:synthetic,table:real}. Our model exhibits strong deraining capabilities, achieving better or comparable results across all metrics. Our method demonstrates a significant 1.83dB improvement in PSNR over NAFNet \cite{chen2022simple} on the Outdoor-Rain dataset. Moreover, our method outperforms SCD-Former \cite{guo2023sky} on the more challenging real-world dataset by 0.20dB in PSNR. Visual results in Fig. \ref{fig:weather} show that our method produces high-quality images resembling ground-truth images with no artifacts.
In particular, our restored images faithfully retain more details, such as the bricks of the walls, than the compared methods, while the compared methods tend to produce smoothed appearances. When the correspondence between HQ and LQ images is ambiguous, one-step estimation converges to the mean of the HQ images conditioned on the LQ images. In contrast, our method introduces the auxiliary variable to disambiguate the HQ images and is able to preserve the sharp details.

\noindent
\textbf{Dehazing Results.}
As shown in Table \ref{table:synthetic}, our method surpasses FocalNet \cite{cui2023focal} by 0.05dB in PSNR on the synthetic Dense-Haze \cite{ancuti2019dense} dataset. On the more challenging real-world NH-HAZE \cite{ancuti2020nh} dataset (Table \ref{table:real}), our approach achieves a notable 1.01dB improvement in PSNR over FocalNet \cite{cui2023focal}, highlighting our method's superior performance in real-world scenarios. Visual results in Fig. \ref{fig:weather} illustrate that our method effectively removes haze while preserving original details.
Similar to the hazy example in \cref{fig:dpdd}, our model recovers more details than other methods, such as the tree's textures. This again demonstrates the promising performance of reversible flows.

\noindent
\textbf{Real Denoising Results.}
As shown in ~\cref{table:real}, our method achieves the best PSNR/SSIM results on the real-world SIDD cite{abdelhamed2018high} denoising dataset. In particular, our method improves PSNR by 2.24dB over Restormer \cite{zamir2022restormer}, demonstrating its superior performance in real denoising scenarios compared to other state-of-the-art methods.

\noindent
\textbf{Defocus Deblurring Results.}
\cref{table:dpdd} presents the quantitative comparison on the DPDD \cite{abuolaim2020defocus} dataset. Our method achieves state-of-the-art results across all metrics compared to existing algorithms. Specifically, our method surpasses FocalNet \cite{cui2023focal} by 0.78dB in the overall category and significantly outperforms DTPM \cite{ye2024learning} by 0.98dB. Moreover, in both indoor and outdoor scenes, our method exceeds the second-best results by 0.71dB and 0.84dB, respectively. These results demonstrate the superior performance of our method in all scenarios. \cref{fig:dpdd} provides visual comparisons.

\begin{figure*}[htbp]
    \centering
    \includegraphics[width=\textwidth]{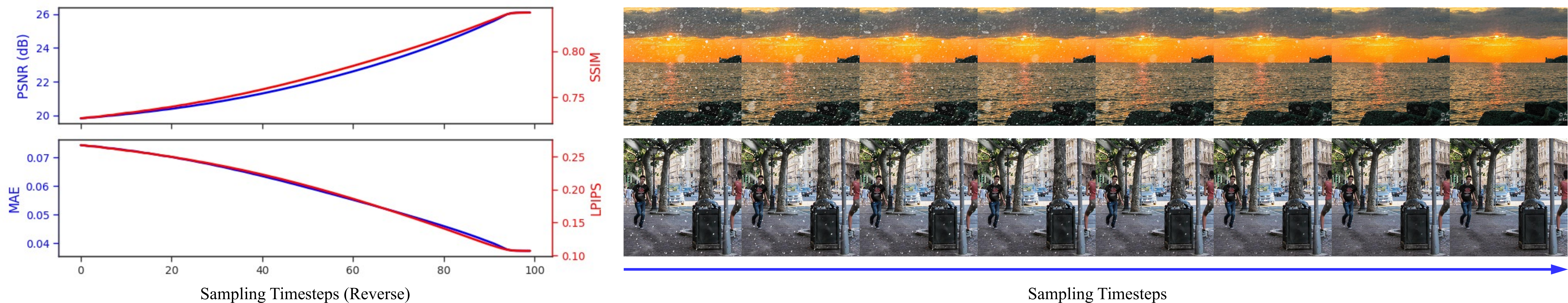}
    \caption{Reserve process performance curves, averaged on 32 samples from the desnowing dataset. The right shows the intermediate results of two example images.}
    \label{fig:abl}
    \vspace{-0.2cm}
\end{figure*}

\begin{table}[htbp]
\caption{Color JPEG compression artifact (QF=10) removal on BSD500 \cite{martin2001database} and LIVE1 \cite{sheikh2005live} datasets.}
\label{table:jpeg}
\renewcommand\arraystretch{1.32}
\resizebox{\columnwidth}{!}{%
\begin{tabular}{lcccc}
\hline
\multirow{2}{*}{\textbf{Method}} & \multicolumn{2}{c}{\textbf{BSD500} \cite{martin2001database}} & \multicolumn{2}{c}{\textbf{LIVE1}\cite{sheikh2005live}} \\ \cline{2-5} 
 & PSNR & SSIM & PSNR & SSIM \\ \hline
QGAC\cite{ehrlich2020quantization} & 27.74 & 0.802 & 27.62 & 0.804 \\
FBCNN\cite{cho2021rethinking} & 27.85 & 0.799 & 27.77 & 0.803 \\
IPT\cite{ozdenizci2023restoring} & 27.57 & 0.792 & 27.37 & 0.799 \\
SwinIR\cite{yang2019scale} & 27.62 & 0.789 & 27.45 & 0.796 \\
DAGN\cite{ma2024sensitivity} & 28.07 & 0.799 & 27.95 & 0.807 \\ \hline
\textbf{ResFlow(Ours)} & \textbf{28.21} & \textbf{0.835} & \textbf{27.95} & \textbf{0.830} \\ \hline
\end{tabular}%
}
\vspace{-0.2cm}
\end{table}

\subsection{Ablation Study}

In this section, we perform an ablation study on the components of our methods, including experiments and analyses about the reverse restoration process, which is the center of image restoration, and the formulation and effects of the auxiliary variable we introduce to enable flow-based degradation modeling. 

\noindent
\textbf{Reverse Restoration Process.}
ResFlow restores images by reversing the degradation process from HQ to LQ images. Specifically, ResFlow progressively removes degradation and noise over several timesteps. We provide specific restoration examples in \cref{fig:abl}, where ResFlow tends to remove degradation along the shortest transport path. Additionally, \cref{fig:abl} shows the performance curve for ResFlow in image desnowing. The results show that deblurring performance improves gradually with the number of steps, converging in the final few steps.

\noindent
\textbf{Reverse Auxiliary Variable.}
The auxiliary variable $\bm{y}_t$ is chosen as a Gaussian distribution at  $t=1$ and zero at $t=0$ at the end of the restoration. We learn the coupling between the auxiliary variable and the uncertainty scope using transport-based generative modeling techniques. By preserving entropy for the reversible process, we provide a schedule for sampling the auxiliary variable at different time steps. Ablation studies on the sampling distribution, schedule, and coupling scheme are presented in \cref{table:abl}. The results show that a Gaussian distribution, which maximizes entropy, helps the model learn the coupling between the auxiliary variable and the uncertainty scope. Additionally, entropy-preserving sampling schedules improve model performance compared to fixed distributions. Finally, we validate different augmentation forms for coupling the auxiliary variable with input, with the generalized augmentation form output by the Adaptor showing the best results.

\begin{table}[t]
\caption{Ablation Study. The average performance on four real-world and synthetic datasets are reported.}
\label{table:abl}
\renewcommand\arraystretch{1.45}
\resizebox{\columnwidth}{!}{%
\begin{tabular}{cc|cc}
\hline
\multicolumn{2}{c|}{\textbf{Method}} & \multicolumn{2}{c}{\textbf{Average}} \\ \hline
\multicolumn{1}{c|}{\textbf{Degradation Schedule}} & \textbf{Auxiliary Variable} & \textbf{PSNR} & \textbf{SSIM} \\ \hline
\multicolumn{1}{c|}{Entropy-preserving} & Gaussian & 25.51 & 0.804  \\
\multicolumn{1}{c|}{Constant} & Guassian & 23.81 & 0.786 \\
\multicolumn{1}{c|}{Entropy-preserving} & Constant & 23.77 & 0.780 \\ \hline
\multicolumn{2}{c|}{\textbf{Injection of Auxiliary Variable}} & \textbf{PSNR} & \textbf{SSIM} \\ \hline
\multicolumn{2}{c|}{Adapter} & 25.51 & 0.804 \\
\multicolumn{2}{c|}{Add} & 24.04 & 0.794 \\
\multicolumn{2}{c|}{Channel-wise concatenation} & 24.94 & 0.798 \\ \hline
\end{tabular}
}
\vspace{-0.2cm}
\end{table}

\section{Conclusion}
\label{sec:conclusion}
\vspace{-0.15cm}

In this paper, we introduced ResFlow, a new framework for image restoration that models the degradation process as a deterministic and invertible flow. By departing from the common practice of stochastic degradation in generative models, ResFlow eliminates unnecessary complexity and allows for a more direct inversion process. By augmenting the degradation process with an auxiliary mechanism, ResFlow disambiguates the uncertainty scope inherent to ill-posed restoration tasks, allowing for fast and efficient inversion of the degradation process. Through extensive experiments, we demonstrated that ResFlow outperforms existing approaches and sets new state-of-the-art, achieving superior restoration quality across several tasks with a few sampling steps. The experimental results across various datasets validate the effectiveness of ResFlow, making it a promising approach for practical image restoration tasks, from weather removal to denoising and beyond. Future work will explore extending ResFlow to more complex degradation models and applying it to video restoration tasks.

\noindent\textbf{Acknowledgments} This work was supported by Beijing Natural Science Foundation (JQ24022), CAAI-Ant Group Research Fund CAAI-MYJJ 2024-02, the National Natural Science Foundation of China (No. 62372451, No. 62192785, No. 62372082, No. 62403462), the Key Research and Development Program of Xinjiang Uyghur Autonomous Region, Grant No. 2023B03024. 

{
    \small
    \bibliographystyle{ieeenat_fullname}
    \bibliography{main}
}

\clearpage
\setcounter{page}{1}
\maketitlesupplementary


\section{Proof of \cref{prop:mi}}

Here, we restate \cref{prop:mi} from \cref{subsec:visual_instructions}.
\begin{customprop}{1}
Given a random process $\{\bm{z}_t \,\vert\, 0 \le t \le 1\}$ defined by
\begin{equation}
    \frac{\partial\bm{z}_t}{\partial t} = \bm{v}(\bm{z}_t, t); \quad 0 \le t \le 1,
    \label{eq:ap_ode}
\end{equation}
where $\bm{v} \in \mathbb{C}^1$ is a velocity field,
denote the mutual information as $\mathrm{MI}(\cdot, \cdot)$, then for any reference random variable $\bm{r}$ and any $0\le t_1, t_2 \le 1$, we have
\begin{equation}
    \mathrm{MI}(\bm{z}_{t_1}, \bm{r}) = \mathrm{MI}(\bm{z}_{t_2}, \bm{r}).
    \label{eq:ap_mi}
\end{equation}
\label{prop:ap_mi}
\end{customprop}

To prove \Cref{prop:ap_mi}, we need the following lemma which states that the mutual information is invariant to invertible maps.
\begin{lemma}[\cite{kraskov2004estimating}]
Given random variables $\bm{X}, \bm{Y}$, suppose the involved mutual information exists and is finite and $\bm{F}, \bm{G}$ are two invertible maps, then
\begin{equation}
    \mathrm{MI}(\bm{X}, \bm{Y}) = \mathrm{MI}(\bm{F}(\bm{X}), \bm{G}(\bm{Y})).
    \label{eq:ap_mi_invariant}
\end{equation}
\label{lemma:ap_mi}
\end{lemma}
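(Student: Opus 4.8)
The plan is to prove the lemma from the variational (Gelfand--Yaglom / Dobrushin) characterization of mutual information, which holds without any smoothness or density assumptions and makes the role of invertibility transparent. I would take as the working definition
\[
\mathrm{MI}(\bm{X}, \bm{Y}) = \sup_{\{A_i\}, \{B_j\}} \sum_{i,j} P(\bm{X}\in A_i, \bm{Y}\in B_j)\log\frac{P(\bm{X}\in A_i, \bm{Y}\in B_j)}{P(\bm{X}\in A_i)\,P(\bm{Y}\in B_j)},
\]
where the supremum ranges over all finite measurable partitions $\{A_i\}$ and $\{B_j\}$ of the respective sample spaces. This choice specializes to the familiar integral $\int p_{XY}\log\frac{p_{XY}}{p_X p_Y}$ whenever densities exist, while remaining valid in general.

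For the main step, since $\bm{F}$ is invertible with measurable inverse, the map $A \mapsto \bm{F}^{-1}(A)$ is a bijection between the finite measurable partitions of the range of $\bm{F}(\bm{X})$ and those of the sample space of $\bm{X}$, and likewise for $\bm{G}$. The crucial observation is that this correspondence preserves every probability appearing in the sum, because $P(\bm{F}(\bm{X})\in A_i) = P(\bm{X}\in \bm{F}^{-1}(A_i))$ and similarly for the joint events $\{\bm{F}(\bm{X})\in A_i,\, \bm{G}(\bm{Y})\in B_j\}$. Consequently each summand for the pair $(\bm{F}(\bm{X}), \bm{G}(\bm{Y}))$ under a partition $(\{A_i\},\{B_j\})$ equals the summand for $(\bm{X},\bm{Y})$ under the pulled-back partition $(\{\bm{F}^{-1}(A_i)\}, \{\bm{G}^{-1}(B_j)\})$. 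Because the correspondence is a bijection, the two families of partition sums coincide as sets, so their suprema are equal, which is exactly \cref{eq:ap_mi_invariant}.

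As a sanity check I would reverify the identity in the density case via change of variables: writing $\bm{U}=\bm{F}(\bm{X})$, $\bm{V}=\bm{G}(\bm{Y})$, the Jacobian factors $|\det J_{\bm F^{-1}}|$ and $|\det J_{\bm G^{-1}}|$ appear identically in $p_{\bm U \bm V}$ and in $p_{\bm U}p_{\bm V}$, so they cancel inside the logarithm; substituting $x=\bm F^{-1}(u)$, $y=\bm G^{-1}(v)$ then turns the integral for $\mathrm{MI}(\bm U, \bm V)$ into the integral for $\mathrm{MI}(\bm X, \bm Y)$. A third route, if one prefers to avoid any explicit definition, is the two-sided data processing inequality: processing $\bm X$ through $\bm F$ cannot increase information, so $\mathrm{MI}(\bm F(\bm X), \bm Y)\le \mathrm{MI}(\bm X, \bm Y)$, while applying the same bound to $\bm F^{-1}$ gives the reverse inequality, hence equality; repeating on the $\bm Y$ coordinate through $\bm G$ completes the argument.

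The finiteness hypothesis is what keeps these manipulations legitimate, ruling out $\infty-\infty$ indeterminacies in the partition sums and guaranteeing integrability in the change of variables. I expect the main obstacle to be the fully general (non-smooth, possibly measure-singular) case, where the density argument is simply unavailable; the partition definition is precisely what resolves this, at the cost of carefully verifying that $\bm F$ and $\bm G$ induce genuine bijections on finite measurable partitions, so that measurability of the \emph{inverse}, not merely injectivity, is the property actually being used.
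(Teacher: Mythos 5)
Your proof is correct, but it takes a genuinely different route from the paper's. The paper offers no self-contained proof of \cref{lemma:ap_mi} at all: it is stated by citation, and the reader is referred to the appendix of \cite{kraskov2004estimating}, whose argument is essentially the density/Jacobian computation that you relegate to a sanity check --- under smooth invertible reparameterizations the Jacobian factors enter the joint density and the product of the marginals identically and cancel inside the logarithm. Your primary argument, via the Gelfand--Yaglom/Dobrushin partition characterization of mutual information, is strictly more general: it needs no densities and no smoothness, only that $\bm{F}$ and $\bm{G}$ be bijections with measurable inverses, and it correctly isolates measurability of the inverse (not mere injectivity) as the operative hypothesis; in the standard Borel setting this caveat is even harmless, since by Lusin--Souslin an injective Borel map between Polish spaces automatically has Borel inverse. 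The extra generality is not cosmetic in this paper: the supplementary entropy derivation explicitly models LQ images as Dirac distributions, and data supported on low-dimensional sets is exactly the regime where the cited density-based proof is vacuous while your partition argument still applies. Your third route --- data processing applied to $\bm{F}$ and then to $\bm{F}^{-1}$, and likewise in the $\bm{Y}$ coordinate --- is the most economical of the three, but it only shifts the burden, since the deterministic DPI in full generality rests on the same partition definition. One small tightening of your main step: for the pullback correspondence to be a genuine bijection on partitions you need both directions, namely that $\{\bm{F}^{-1}(A_i)\}$ is a measurable partition of the domain (measurability of $\bm{F}$) and that every measurable partition $\{C_k\}$ of the domain arises as such a pullback, i.e., that $\{\bm{F}(C_k)\}$ is a measurable partition of the range with $\bm{F}^{-1}(\bm{F}(C_k)) = C_k$ --- which is precisely where injectivity and measurability of $\bm{F}^{-1}$ both enter, as you note at the end.
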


See the appendix of \cite{kraskov2004estimating} for a proof of \cref{lemma:ap_mi}.
Now we can prove \cref{prop:ap_mi} using \cref{lemma:ap_mi}.

\begin{proof}[Proof of \cref{prop:ap_mi}]
Suppose \cref{eq:ap_ode} has unique solutions, and the involved mutual information exists and is finite.
Consider the flow $\bm{\Phi}$ of \cref{eq:ap_ode}, that is, $\bm{\Phi}(\bm{z}, s, t)$ solves the following initial-value problem:
\begin{equation}
\begin{cases}
    \frac{\partial\bm{z}_t}{\partial t} = \bm{v}(\bm{z}_t, t), \quad 0 \le t \le 1 \\
    \bm{z}_s = \bm{z}
\end{cases},
\label{eq:ap_ivp}
\end{equation}
Symmetrically, consider the following ODE in reverse time:
\begin{equation}
    \frac{\partial\bm{z}'_t}{\partial t} = -\bm{v}(\bm{z}'_t, 1-t); \quad 0 \le t \le 1,
    \label{eq:ap_ode_reverse}
\end{equation}
its flow is denoted as $\bm{\Psi}(\bm{z}, s, t)$ which solves the following initial-value problem:
\begin{equation}
\begin{cases}
    \frac{\partial\bm{z}'_t}{\partial t} = -\bm{v}(\bm{z}'_t, 1-t), \quad 0 \le t \le 1 \\
    \bm{z}'_s = \bm{z}
\end{cases},
\label{eq:ap_ivp_reverse}
\end{equation}

Consider $\bm{u}_t = [\bm{z}_t^\mathsf{T}, t]^\mathsf{T}$, it satisfies the following autonomous (time-invariant) system:
\begin{equation}
    \frac{\partial}{\partial t}\bm{u}_t
    = \frac{\partial}{\partial t}\left[\begin{matrix}\bm{z}_t\\t\end{matrix}\right]
    = \left[\begin{matrix}\bm{v}(\bm{z}_t, t)\\1\end{matrix}\right]
    = \left[\begin{matrix}\bm{v}(\bm{u}_t)\\1\end{matrix}\right].
    \label{eq:ap_forward_ode}
\end{equation}
Let $\bm{U}$ be the flow of \cref{eq:ap_forward_ode}, then, $\bm{U}(\bm{z}, 0, t) = [\bm{\Phi}^\mathsf{T}(\bm{z}, 0, t), t]^\mathsf{T}$.
Symmetrically, consider $\bm{w}_t = [{\bm{z}'_t}^\mathsf{T}, 1-t]^\mathsf{T}$ satisfying the following autonomous system:
\begin{equation}
    \frac{\partial}{\partial t}\bm{w}_t
    = \frac{\partial}{\partial t}\left[\begin{matrix}\bm{z}'_t\\1-t\end{matrix}\right]
    = \left[\begin{matrix}-\bm{v}(\bm{z}'_t, 1-t)\\-1\end{matrix}\right]
    = -\left[\begin{matrix}\bm{v}(\bm{w}_t)\\1\end{matrix}\right].
    \label{eq:ap_backward_ode}
\end{equation}
Let $\bm{W}$ be the flow of \cref{eq:ap_backward_ode}, then, $\bm{W}(\bm{z}, 0, t) = [\bm{\Psi}^\mathsf{T}(\bm{z}, 0, t), 1-t]^\mathsf{T}$.
Comparing \cref{eq:ap_forward_ode,eq:ap_backward_ode}, it can be seen that they differ only in the sign of the right-hand side. As a result, solving \cref{eq:ap_forward_ode} obtains the \textit{same trajectory in reverse time} as solving \cref{eq:ap_backward_ode} as long as their initial conditions are compatible, that is, $\bm{z}'_s = \bm{\Phi}(\bm{z}, 0, s)$.
Consequently, $\Phi$ and $\Psi$ form a pair of invertible maps for $0 \le s < t \le 1$:
\begin{equation}
    \bm{\Phi}(\bm{\Psi}(\bm{z}, 1-t, 1-s), s, t) = \bm{\Phi}(\bm{\Phi}(\bm{z}, t, s), s, t) = \bm{z}.
    \label{eq:ap_invertible}
\end{equation}

For any reference random variable $\bm{r}$ and any $0\le t_1, t_2 \le 1$, $\bm{G} := \bm{\Phi}(\cdot, t_1, t_2)$ is an invertible map according to \cref{eq:ap_invertible}. Take $\bm{G}$ to be identity map, $\bm{X}=\bm{r}$ and $\bm{Y} = \bm{z}_{t_1}$ in \cref{lemma:ap_mi} gives \cref{eq:ap_mi_invariant}.
\end{proof}

\section{Derivation of Entropy-Preserving Degradation Schedule}

In \cref{subsec:parameterization}, we propose the following entropy-preserving degradation schedule:
\begin{align}
    \alpha_t^{\bm{x}} &= 1 - t,  \quad \sigma_t^{\bm{x}} = 1 - \alpha_t^{\bm{x}} \\ 
    \sigma_t^{\bm{y}} &= \beta \cdot \left( 1 - t + \beta \right)^{-1},  \quad \sigma_t^{\bm{y}} = 1 - \alpha_t^{\bm{y}},
    \label{eq:ap_schedule}
\end{align}
where $\beta=10$ is a hyperparameter.
The intuition is that the entropy should remain constant for a reversible process. For a discrete state space, this intuition holds because, according to \cref{prop:mi}, for all $0 \le s < t \le 1$ we have:
\begin{align}
    H(\bm{z}_s) 
    &= \mathrm{MI}(\bm{z}_s, \bm{z}_s) & \text{(Property of entropy)} \\
    &= \mathrm{MI}(\bm{z}_t, \bm{z}_s) & \text{($\bm{r}=\bm{z}_s$ in \cref{eq:ap_mi})} \\
    &= \mathrm{MI}(\bm{z}_s, \bm{z}_t) & \text{(Symmetry of MI)} \\
    &= \mathrm{MI}(\bm{z}_t, \bm{z}_t) & \text{($\bm{r}=\bm{z}_t$ in \cref{eq:ap_mi})} \\
    &= H(\bm{z}_t)  & \text{(Property of entropy)}
    .
    \label{eq:equal_entropy}
\end{align}
As approximations, assume that
1) HQ images follow uniform distribution $\mathcal{U}$ on $[0, 1]$, which is the maximum-entropy distribution on the normalized pixel range;
2) LQ images follow the Dirac distribution at $0$, which can be seen as the extreme of image degradations.
Then, according to \cref{eq:augment,eq:interpolation_1,eq:interpolation} and $\bm{y}_0 = \bm{0}, \bm{y}_1 \sim \mathcal{N}(\bm{0}, \bm{I})$, we have
\begin{align}
    H(\bm{z}_t) 
    &= H(\bm{x}_t) + H(\bm{y}_t) \\
    &= H(\alpha_t^{\bm{x}}\bm{x}_0 + \sigma_t^{\bm{x}}\bm{x}_1) + H(\alpha_t^{\bm{y}}\bm{y}_0 + \sigma_t^{\bm{y}}\bm{y}_1) \\
    &= H((1-t)\bm{x}_0) + H(\sigma_t^{\bm{y}}\bm{y}_1) \\
    &= d \ln (1-t) + \frac{d}{2} (1 + \ln (2 \pi)) + d \ln \sigma_t^{\bm{y}}
    ,
    \label{eq:ap_entropy_t}
\end{align}
where $d$ is the channel dimension.
The last equality comes from the fact that the entropy of $\mathcal{U}[a, b]$ is $\ln|b-a|$ and that the entropy of $\mathcal{N}(\bm{\mu}, \bm{\Sigma})$ is $\frac{d}{2} (1 + \ln (2 \pi)) + \frac{1}{2} \ln |\bm{\Sigma}|$ \cite{kraskov2004estimating}.
Similarly,
\begin{align}
    H(\bm{z}_s) 
    = d \ln (1-s) + \frac{d}{2} (1 + \ln (2 \pi)) + d \ln \sigma_s^{\bm{y}}
    .
    \label{eq:ap_entropy_s}
\end{align}
Plugging \cref{eq:ap_entropy_t,eq:ap_entropy_s} into \cref{eq:equal_entropy} gives
\begin{align}
    \ln (1-t) + \ln \sigma_t^{\bm{y}}
    = \ln (1-s) + \ln \sigma_s^{\bm{y}}
    .
    \label{eq:ap_entropy_st}
\end{align}
Let $s=0$ and $\sigma_0^{\bm{y}}=\beta > 0$ in \cref{eq:ap_entropy_st} gives the form of $\sigma_t^{\bm{y}}$ as:
\begin{align}
    \sigma_t^{\bm{y}}
    = \beta/(1-t)
    .
    \label{eq:ap_entropy_sigma_origin}
\end{align}
\cref{eq:ap_entropy_sigma_origin} is singular at $t=1$, so we introduce $\beta$ into the denominator of \cref{eq:ap_entropy_sigma_origin}, obtaining the final form in \cref{eq:ap_schedule}. \cref{fig:schedule} shows the curves of the degradation schedule $\sigma_t^{\bm{y}}$ under various $\beta$ values. We fix $\beta=10$ for all experiments without tuning.

\begin{figure}[h]
    \centering
    \includegraphics[width=\linewidth]{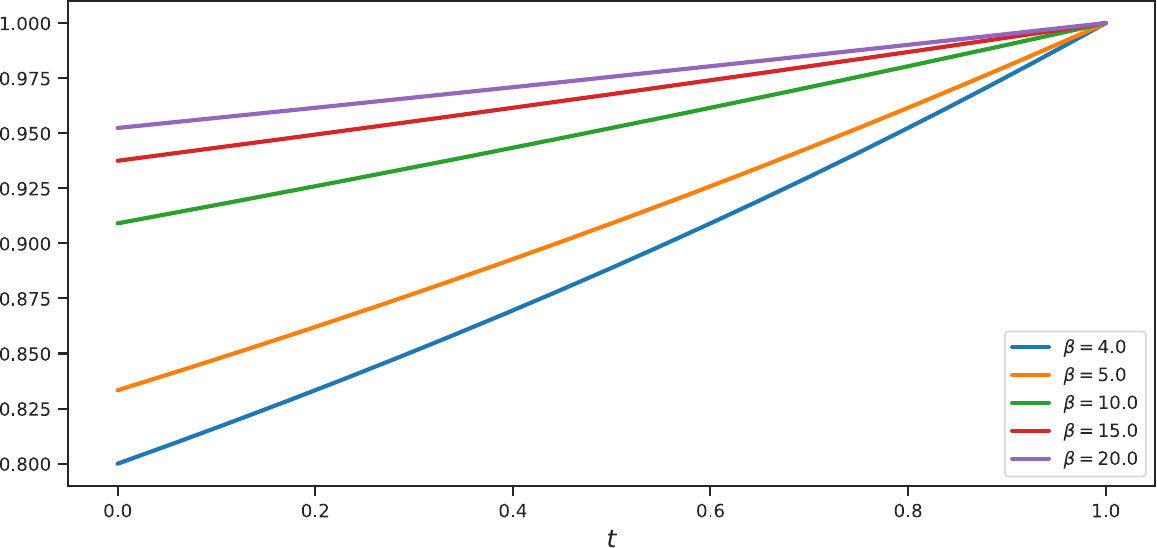}
    \caption{Degradation schedule $\sigma_t^{\bm{y}}$ with various $\beta$ values.}
    \label{fig:schedule}
\end{figure}

\cref{table:decay_schedule} compares our degradation schedule with the constant schedule and the linearly decaying schedule. Our degradation schedule is consistently better than both alternatives, demonstrating the empirical effectiveness of our approach.

\begin{table}[htbp]
\centering
\caption{Experiments about degradation schedules. The average performance on deraining/desnowing/denoising datasets is reported.}
\label{table:decay_schedule}
\begin{tabular}{c|cc}
\hline
\textbf{Method} & PSNR & SSIM \\ \hline
Constant & 35.69 & 0.943 \\
Linear & 36.25 & 0.947 \\
Ours & 36.82 & 0.949 \\ \hline
\end{tabular}
\end{table}

\section{Datasets}

We evaluate the performance of image restoration methods on five major image restoration tasks, including desnowing, draining, dehazing, denoising, and JPEG compression artifact removal, using synthetic and real-world datasets. Details of the datasets are given below according to their corresponding tasks:

\vspace{0.5em}
\noindent\textbf{Image Desnowing}:
Snow100K \cite{liu2018desnownet} is a synthetic snow removal public dataset that includes synthetic snow images and corresponding snow-free GT images. The simulated snowflake particles contain a variety of different nozomura, and also have different densities, shapes, trajectories, and transparency in order to add variation. We used Snow100K-l, which has the highest level of diversity, for the evaluation method.
Snow100K \cite{liu2018desnownet} contains 50000 images for training and 50000 images for testing.
RealSnow \cite{zhu2023learning} is a real-world snow removal dataset that acquires image pairs from background-static video. These images feature a variety of urban and natural background scenes that contain varying densities of snowfall and illuminations.
RealSnow \cite{zhu2023learning} contains 61500 (crops) and 240 training and testing images.

\vspace{0.5em}
\noindent\textbf{Image Deraining}:
Outdoor-rain \cite{li2019heavy} is a synthetic rain removal dataset, which render synthetic rain  streaks and rain accumulation effects based on the provided depth information. These effects include the veiling effect  caused by the water particles, as well as image blur. Outdoor-Rain is a set of outdoor rainfall datasets created on clean outdoor images.
Outdoor-Rain \cite{li2019heavy} contains 8100 images for training and 900 images for testing;.
LHP \cite{guo2023sky} is a real-world rain removal dataset. Real image pairs are acquired by keeping the camera motionless to record real rain videos with static backgrounds, which contains a variety of rainfall patterns rain a variety of typical scenarios.
LHP \cite{guo2023sky} contains 300 images for testing.

\vspace{0.5em}
\noindent\textbf{Image Dehazing:}
Dense-Haze \cite{ancuti2019dense} is a synthetic defogging dataset with dense and homogeneous haze scenes. It contains haze images and corresponding clean images of various outdoor scenes.
Dense-Haze \cite{ancuti2019dense} contains 49 images for training and 6 images for testing. 
NH-HAZE \cite{ancuti2020nh} is a realistic image dehazing dataset with non-homogeneous hazy and haze-free paired images. The non-homogeneous haze has been generated using a professional haze generator that imitates the real conditions of haze scenes. It contains various outdoor scenes.
NH-HAZE \cite{ancuti2020nh} contains 49 images for training and 6 images for testing, totaling 55 outdoor scenes.

\vspace{0.5em}
\noindent\textbf{Real Denoising:}
SIDD \cite{abdelhamed2018high} is a realistic denoising dataset, which captured real noisy images using five representative smartphone cameras and generated their ground truth images.
SIDD \cite{abdelhamed2018high} contains 288 images for training and 32 images for testing.

\vspace{0.5em}
\noindent\textbf{Defocus Deblur:}
DPDD \cite{abuolaim2020defocus} is a synthetic defocus debluring dataset, which capture a pair of images of the same static scene at two aperture sizes which are the maximum (widest) and minimum (narrowest) apertures possible for the lens configuration.
Focus distance and focal length differ across captured pairs in order to capture a diverse range of defocus blur types.
DPDD \cite{abuolaim2020defocus} contains 350 images for training and 76 images for testing.

\vspace{0.5em}
\noindent\textbf{JPEG Artifact removal:}
The training dataset is collected from DIV2K and FLICKR2K \cite{Agustsson_2017_CVPR_Workshops} containing 900 and 2650 images, respectively. The testing dataset includes LIVE1 \cite{sheikh2005live} which contains 29 testing images, and BSD500 \cite{arbelaez2010contour} which contains 500 testing images.



\section{Implementation Details}

\begin{figure}[h]
    \centering
    \includegraphics[width=\linewidth]{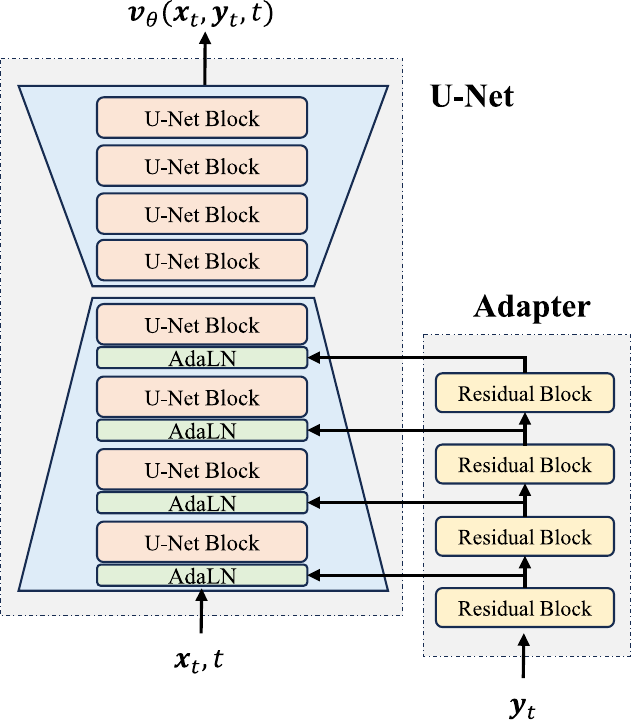}
    \caption{Model architecture.}
    \label{fig:sup_model}
\end{figure}

All experiments adopt the same U-Net architecture from \cite{ho2020denoising} as the backbone.
The input to the U-Net is the $\bm{x}_t$ starting from $\bm{x}_1$ as the LQ image. The output of the U-Net is the velocity $\bm{v}_\theta(\bm{x}_t, \bm{y}_t, t)$.
We remove the class-label conditioning and condition the model on $\bm{y}_t$ via an adapter \cite{mou2024t2i} as illustrated in \cref{fig:sup_model}.
The adapter processes $\bm{y}_t$ with a stack of residual blocks \cite{he2016deep}, each downsampling the feature maps of $\bm{y}_t$ to align with the spatial size of the feature maps of the corresponding in the U-Net. The feature maps are fused with the corresponding feature maps of the U-Net by AdaLN \cite{huang2017arbitrary}. The output layer of each residual block is initialized to zero, so the overall model is equivalent to the U-Net initially.

The model is trained with a batch size of 8. We use the AdamW optimizer \cite{loshchilov2017decoupled} with $\beta_1 = 0.9$ and $\beta_2 = 0.999$. The initial learning rate is 1e-4 and decayed to 1e-6 via cosine annealing \cite{loshchilov2016sgdr}. The input images are normalized to the range of $[-1, 1]$ and randomly cropped to $256\times256$ during training. The model is trained on each dataset with eight NVIDIA A100 GPUs for 400K iterations.

\section{Discussion of Different Approaches}

Image restoration is modeled as paired image-to-image translation from LQs to HQs. Historically image restoration methods are \textit{one-step} models that map LQs to HQs with one single step of inference. Diffusion-based models, however, are \textit{multi-step} models that map refine LQs in one or more iterative steps.
Similar to diffusion-based methods, our ResFlow allows multistep image restoration that both improves the quality of restored HQ and boosts training (because each step only requires partially restoring the image).
Different from diffusion-based methods, ResFlow models HQ-to-LQ degradation by deterministic paths (normalizing flows), which is both easier to learn and more efficient to sample from LQ to HQ.
CDPMS \cite{niu2023cdpmsr}, RDDM \cite{liu2024residual}, etc. extend diffusion models by modifying the starting point or diffused variables (e.g. introducing residuals). However, they still suffer from low training and sampling efficiency due to stochastic degradation paths of diffusion models. Our ResFlow introduces deterministic degradation paths to solve this problem and achieves superior performance.
Note that all the methods we discuss and compare with, including our ResFlow, are learned by supervised training using paired LQ and HQ images. Learning image restoration models with unpaired images is still an open problem.

An interesting method that draws similarity with our ResFlow is Cold Diffusion \cite{bansal2023cold}, which tackles image generation. Cold Diffusion also inverts the degradations applied to the images that it is trained to generate (similar to the HQ images). However, the crucial difference between Cold Diffusion and ResFlow is that Cold Diffusion only requires the generated images to be natural, while ResFlow focuses on LQ-to-HQ restoration and also require HQ to preserve LQ's information. The ``generation paths'' of Cold Diffusion are bounded only on one end; but the degradation paths of ResFlow (\Cref{eq:ode}) are bounded by both LQ and HQ.
Put the difference in implementation, during training, during training, Cold Diffusion learns to estimate the noise-free images; while ResFlow learns to estimate the velocity of degradation flow and introduces an auxiliary variable to ensure reversibility (\Cref{eq:optimization_inner,eq:optimization}).
During inference, Cold Diffusion iteratively estimates noise-free images and degrades them with less intensity; while we solve \Cref{eq:ode,eq:augment} from $t=1$ to $0$ by Euler integration (that is, accumulating velocity $\times$ step size).

\section{Addition Experimental Results}

\noindent\textbf{Computational costs.} Diffusion models are notoriously slow because they require dozens or even hundreds of inference steps. However, our ResFlow can generate high quality restored in as few as two or even one step. Compared with diffusion-based models such as WeatherDiff \cite{ozdenizci2023restoring}, we consistently achieve better performance (e.g., \textbf{32.82} vs 28.38 PSNR$\uparrow$ for deraining) with a significantly lower computational cost of \textbf{592.44} vs. 2634.8 GFLOPs$\downarrow$ and \textbf{420.8s} vs. 2488.8s latency$\downarrow$.

\noindent\textbf{Extra visualizations.}
We provide more visualization results on the synthesized and real-world datasets as shown in \cref{fig:fig_suppl1,fig:fig_suppl2}. Synthesized datasets contain Desnowing, Deraining, Dehazing, and Single-image Defocus Deblurring results on Snow100K \cite{liu2018desnownet}, Outdoor-Rain \cite{li2019heavy}, Dense-Haze \cite{ancuti2019dense}, and DPDD \cite{abuolaim2020defocus} datasets. Real-world datasets contain Dehazing results on NH-HAZE 
\cite{ancuti2020nh}, Denoising results on SIDD \cite{abdelhamed2018high}, Deraining results on LHP \cite{guo2023sky}, and Desnowing results on RealSnow \cite{zhu2023learning}.
Extra visualizations on DPDD \cite{abuolaim2020defocus} is shown in \cref{fig:flow_sup_cr1}, our method significantly outperforms Restormer \cite{zamir2022restormer} perceptually.

\Cref{fig:flow_sup_cr2} shows the impact of auxiliary variables on the generated results. After optimizing \Cref{eq:optimization_inner,eq:optimization}, ResFlow learns a deterministic coupling from the joint distribution of the auxiliary variable and LQ, to that of the HQ. Conceptually, the auxiliary variable is mapped to the ``information difference'' between HQ and LQ. When there are multiple possible HQs for an LQ, sampling an auxiliary variable and evaluating \Cref{eq:parameterization} will produce a unique velocity, leading to a unique HQ, thus disambiguating the velocity and HQ.
\Cref{fig:flow_sup_cr2} is an example of how different auxiliary variables (Aux.) lead to different HQ via different velocities, where blue boxes highlight the differences.

\begin{figure*}[ht]
    \centering
    \includegraphics[width=0.88\textwidth]{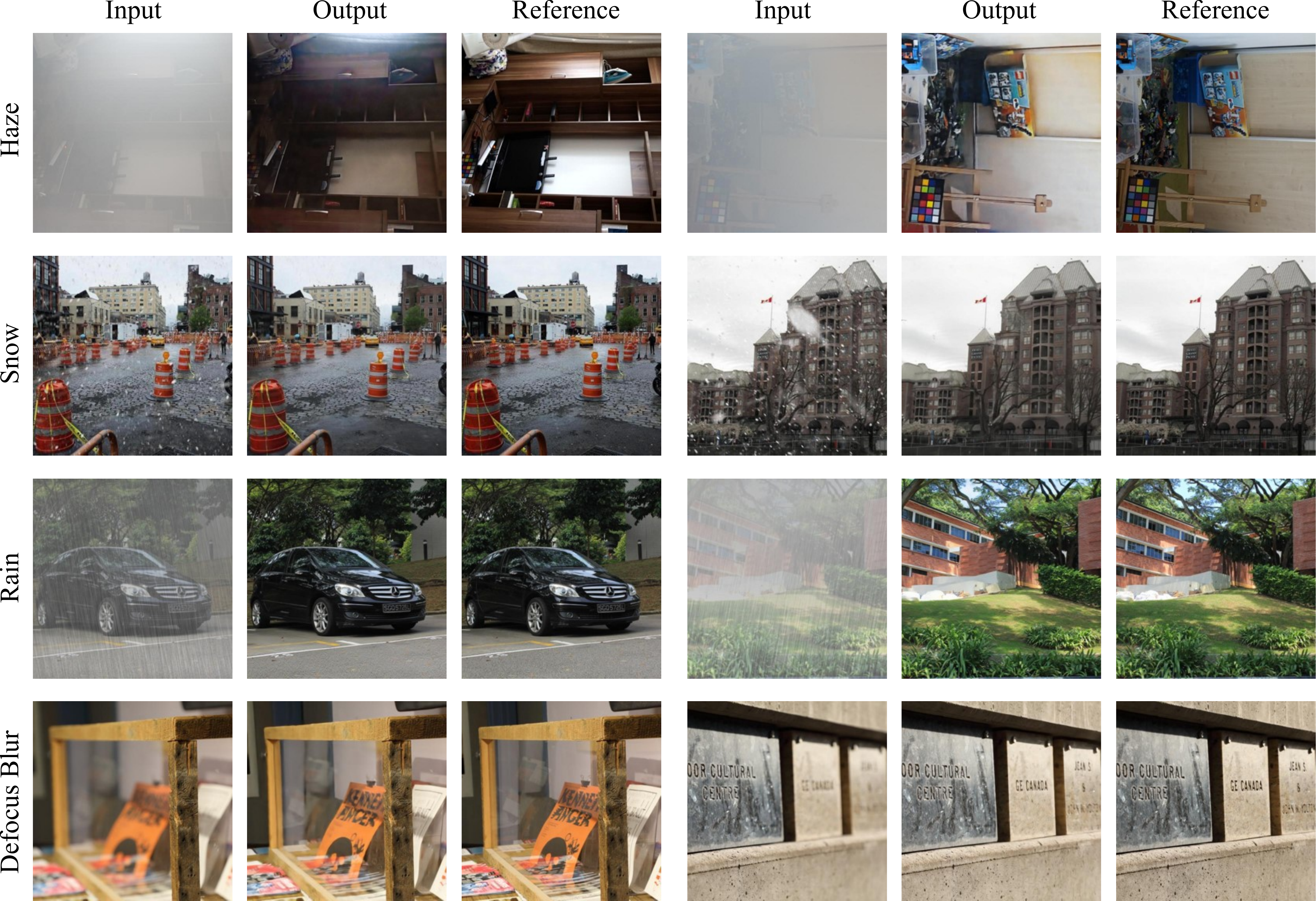}
    \vspace{-0.5em}
    \caption{Visual results of synthesized datasets.}
    \label{fig:fig_suppl1}
\end{figure*}

\begin{figure*}[ht]
    \centering
    \includegraphics[width=0.88\textwidth]{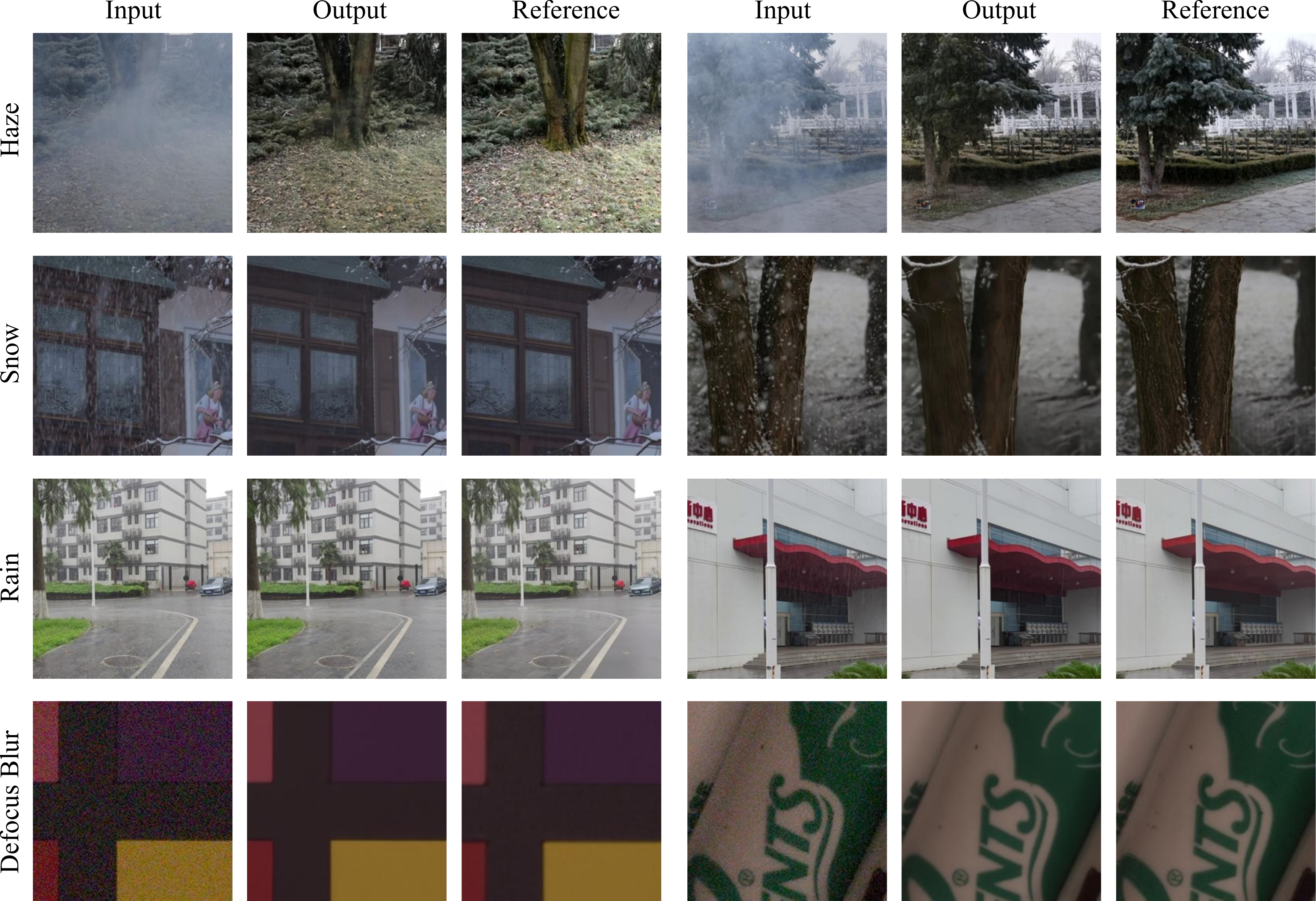}
    \vspace{-0.5em}
    \caption{Visual results of real-world datasets.}
    \label{fig:fig_suppl2}
\end{figure*}

\begin{figure*}[!h]
    \centering
    \includegraphics[width=\textwidth]{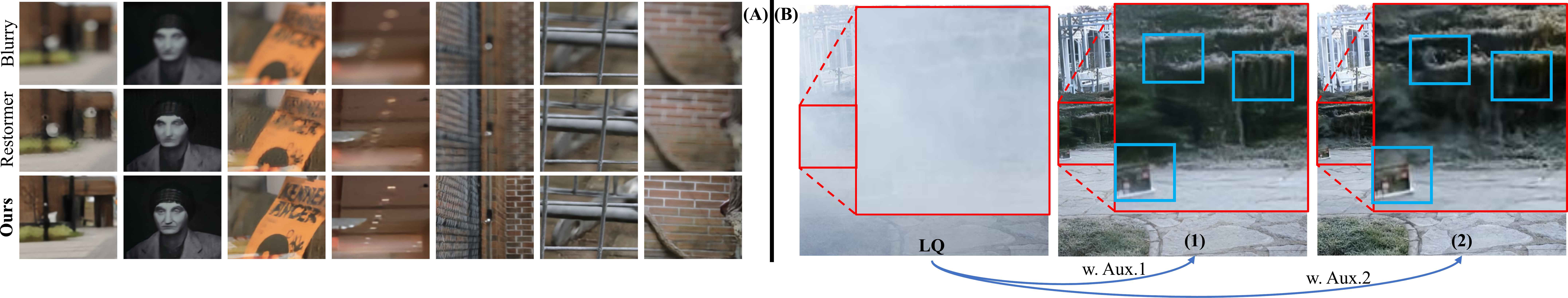}
    \caption{Extra single-image defocus deblurring results on the DPDD \cite{abuolaim2020defocus} dataset. The part of the image is methodized to observe the local details clearly. From top to bottom: input blurry images, the predicted images obtained by Restormer \cite{zamir2022restormer} and our ResFlow.}
    \label{fig:flow_sup_cr1}
\end{figure*}

\begin{figure*}[!h]
    \centering
    \includegraphics[width=\textwidth]{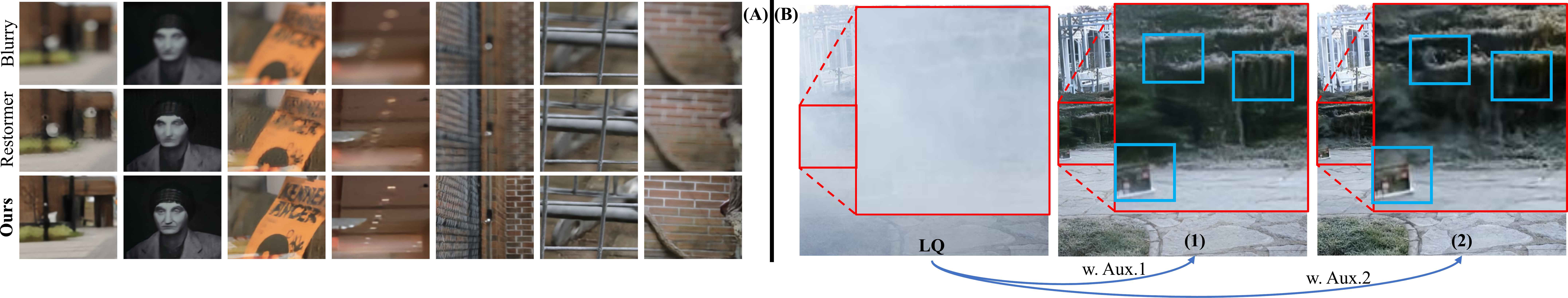}
    \caption{Extra single-image defocus deblurring results on the DPDD \cite{abuolaim2020defocus} dataset. The part of the image is methodized to observe the local details clearly. From top to bottom: input blurry images, the predicted images obtained by Restormer \cite{zamir2022restormer} and our ResFlow.}
    \label{fig:flow_sup_cr2}
\end{figure*}

\end{document}